\newlength{\defbaselineskip}
\newcommand{\setlinespacing}[1]%
           {\setlength{\baselineskip}{#1 \defbaselineskip}}
\newcommand{\actaqed}{\hfill $\actabox$}
{\medskip\noindent \textit{Proof of #1. }}%
{\actaqed \medskip}
\def\D{{\mathcal D}}
\def\C{{\mathcal C}}
\def\R{{\mathbb R}}
\def \<{\langle}
\def\>{\rangle}
\def \e{\epsilon}
\def \ff{\varphi}
\def \sp{\operatorname{span}}
\def\a{\alpha}
\def\ga{\gamma}
\newtheorem{Theorem}{Theorem}[section]
\newtheorem{Lemma}{Lemma}[section]
\numberwithin{equation}{section}
\begin{document}
\title{{Greedy expansions in convex optimization} }
\author{V.N. Temlyakov \thanks{ University of South Carolina. Research was supported by NSF grant DMS-0906260 }} \maketitle
\begin{abstract}
{This paper is a follow up to the previous author's paper on convex optimization. In that paper we began the process of adjusting greedy-type algorithms from nonlinear approximation for finding sparse solutions of convex optimization problems. We modified there three the most popular in nonlinear approximation in Banach spaces greedy algorithms -- Weak Chebyshev Greedy Algorithm, Weak Greedy Algorithm with Free Relaxation and Weak Relaxed Greedy Algorithm -- for solving convex optimization problems. 
We continue to study sparse approximate solutions to convex optimization problems. It is known that in many engineering applications researchers are interested in 
an approximate solution of an optimization problem as a linear combination of elements from a given system of elements. There is an increasing interest in building such sparse approximate solutions using different greedy-type algorithms. In this paper we concentrate on greedy algorithms that provide expansions, which means that the approximant at the $m$th iteration is equal to the sum of the approximant from the previous iteration ($(m-1)$th iteration) and one element from the dictionary with an appropriate coefficient. The problem of greedy expansions of elements of a Banach space is well studied 
in nonlinear approximation theory.  At a first glance the setting of a problem of
expansion of a given element and the setting of the problem of expansion in an optimization problem are very different. However, it turns out that the same technique can be used for solving both problems.
We show how the technique developed in nonlinear approximation theory,
in particular, the greedy expansions technique can be adjusted for finding a sparse solution  of an optimization problem  given by an expansion with respect to a given dictionary.    }
\end{abstract}

\section{Introduction}

This paper is a follow up to the author's paper \cite{T} on convex optimization. In \cite{T} we began the process of adjusting greedy-type algorithms from nonlinear approximation for finding sparse solutions of convex optimization problems. We modified in \cite{T} three the most popular in nonlinear approximation in Banach spaces greedy algorithms -- Weak Chebyshev Greedy Algorithm, Weak Greedy Algorithm with Free Relaxation and Weak Relaxed Greedy Algorithm -- for solving convex optimization problems. 
We continue to study sparse approximate solutions to convex optimization problems. We apply the technique developed in nonlinear approximation known under the name of {\it greedy approximation}. A typical 
problem of convex optimization is to find an approximate solution to the problem
\begin{equation}\label{1.0}
\inf_x E(x)
\end{equation}
under assumption that $E$ is a convex function. Usually, in convex optimization function $E$ is defined on a finite dimensional space $\R^n$ (see \cite{BL}, \cite{N}).
Recent needs of numerical analysis call for consideration of the above optimization problem on an infinite dimensional space, for instance, a space of 
continuous functions. Thus, we consider a convex function $E$ defined on a Banach space $X$. It is pointed out in \cite{Z} that in many engineering applications researchers are interested in 
an approximate solution of problem (\ref{1.0}) as a linear combination of elements from a given system $\D$ of elements. There is an increasing interest in building such sparse approximate solutions using different greedy-type algorithms (see, for instance, \cite{Z}, \cite{SSZ}, \cite{CRPW},\cite{TRD}, and \cite{T}). The problem of approximation of a given element $f\in X$ by linear combinations of elements from $\D$ is well studied 
in nonlinear approximation theory (see, for instance \cite{D}, \cite{T2}, \cite{Tbook}). Many of known greedy-type algorithms provide such approximation in a form of expansion of a given element into a series with respect to a given dictionary $\D$.  In the paper \cite{T} we showed how some of the greedy algorithms that provide good approximation, but not an expansion, can be adjusted for solving an optimization problem. In this paper we concentrate on greedy algorithms that provide expansions, which means that the approximant at the $m$th iteration is equal to the sum of the approximant from the previous iteration ($(m-1)$th iteration) and one element from the dictionary with an appropriate coefficient.

  We point out that at a first glance the setting of a problem of
expansion of a given element and the setting of the expansion problem in an optimization  are very different. However, it turns out that the same technique can be used for solving both problems.
We show how the technique developed in nonlinear approximation theory,
in particular, the greedy expansions technique can be adjusted for finding a sparse solution  of an optimization problem (\ref{1.0}) given by an expansion with respect to a given dictionary $\D$. 
 
We begin with a brief description of greedy expansion methods in Banach spaces. Let $X$ be a Banach space with norm $\|\cdot\|$. We say that a set of elements (functions) $\D$ from $X$ is a dictionary, respectively, symmetric dictionary, if each $g\in \D$ has norm bounded by one ($\|g\|\le1$),
$$
g\in \D \quad \text{implies} \quad -g \in \D,
$$
and the closure of $\sp \D$ is $X$. In this paper symmetric dictionaries are considered. We denote the closure (in $X$) of the convex hull of $\D$ by $A_1(\D)$.   For a nonzero element $f\in X$ we let $F_f$ denote a norming (peak) functional for $f$: 
$$
\|F_f\| =1,\qquad F_f(f) =\|f\|.
$$
The existence of such a functional is guaranteed by Hahn-Banach theorem.

We 
assume that the set
$$
D:=\{x:E(x)\le E(0)\}
$$
is bounded.
For a bounded set $S$ define the modulus of smoothness of $E$ on $S$ as follows
\begin{equation}\label{1.1}
\rho(E,u):=\rho(E,S,u):=\frac{1}{2}\sup_{x\in S, \|y\|=1}|E(x+uy)+E(x-uy)-2E(x)|.
\end{equation}

We assume that $E$ is Fr{\'e}chet differentiable. Then convexity of $E$ implies that for any $x,y$ 
\begin{equation}\label{1.2}
E(y)\ge E(x)+\<E'(x),y-x\>
\end{equation}
or, in other words,
\begin{equation}\label{1.3}
E(x)-E(y) \le \<E'(x),x-y\> = \<-E'(x),y-x\>.
\end{equation} 
We will often use the following simple lemma.
\begin{Lemma}\label{L1.1} Let $E$ be Fr{\'e}chet differentiable convex function. Then the following inequality holds for $x\in S$
\begin{equation}\label{1.4}
0\le E(x+uy)-E(x)-u\<E'(x),y\>\le 2\rho(E,S,u\|y\|).  
\end{equation}
\end{Lemma}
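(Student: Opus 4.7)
The plan has two independent halves, one for each inequality.

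For the lower bound, I would just invoke convexity of $E$ via (\ref{1.2}) with the roles $x\mapsto x$ and $y\mapsto x+uy$: this gives $E(x+uy)\ge E(x)+\<E'(x),uy\> = E(x)+u\<E'(x),y\>$, which is exactly the left-hand inequality in (\ref{1.4}). No use of the modulus of smoothness is needed here.

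For the upper bound, the idea is to symmetrize around $x$ and feed the symmetric combination into the definition of $\rho(E,S,\cdot)$. Writing $y=\|y\|z$ with $\|z\|=1$ and setting $v:=u\|y\|$, I would first note that definition (\ref{1.1}) gives
\begin{equation*}
E(x+uy)+E(x-uy)-2E(x)\le 2\rho(E,S,u\|y\|),
\end{equation*}
since $x\in S$ and $\|z\|=1$. Separately, applying the convexity inequality (\ref{1.2}) this time at the point $x-uy$ (in the form $E(x-uy)\ge E(x)+\<E'(x),-uy\>$) yields $-u\<E'(x),y\>\le E(x-uy)-E(x)$. Adding this to $E(x+uy)-E(x)$ replaces the offending linear term by $E(x-uy)-E(x)$, and the modulus of smoothness bound above then closes the argument.

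There is no real obstacle: the proof is just the standard trick of splitting $E(x+uy)-E(x)-u\<E'(x),y\>$ into the symmetric quantity $E(x+uy)+E(x-uy)-2E(x)$ (controlled by $\rho$) minus the quantity $E(x-uy)-E(x)+u\<E'(x),y\>$ (nonnegative by convexity). The only thing to be slightly careful about is that one applies the definition of $\rho$ to the unit vector $z=y/\|y\|$ at the scale $v=u\|y\|$, so that the factor $\|y\|$ lands inside the modulus argument rather than outside. I would present the two halves in the order (lower bound, then upper bound) and keep the whole proof to a few lines.
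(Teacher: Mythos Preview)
Your proposal is correct and matches the paper's own proof essentially line for line: the left inequality is immediate from the convexity inequality (\ref{1.2}), and for the right inequality the paper likewise uses the symmetric bound $E(x+uy)+E(x-uy)\le 2E(x)+2\rho(E,S,u\|y\|)$ together with $E(x-uy)\ge E(x)-u\<E'(x),y\>$ from (\ref{1.2}). Your extra care in writing $y=\|y\|z$ with $\|z\|=1$ to fit the definition (\ref{1.1}) exactly is a good touch (and handles the trivial $y=0$ case separately), but otherwise the arguments are identical.
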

\begin{proof} The left inequality follows directly from (\ref{1.2}).
  Next, from the definition of modulus of smoothness it follows that
\begin{equation}\label{1.5}
E(x+uy)+E(x-uy)\le 2(E(x)+\rho(E,S,u\|y\|)).  
\end{equation}
Inequality (\ref{1.2}) gives
\begin{equation}\label{1.6}
E(x-uy)\ge E(x) + \<E'(x),-uy\>=E(x)-u\<E'(x),y\>. 
\end{equation}
Combining (\ref{1.5}) and (\ref{1.6}), we obtain
$$
E(x+uy)\le E(x)+u\<E'(x),y\>+2\rho(E,S,u\|y\|).
$$
This proves the second inequality. 
\end{proof}

From the definition of a dictionary it follows that any element $f\in X$ can be approximated arbitrarily well by finite linear combinations of the dictionary elements. The primary goal of greedy expansion theory is to study representations of an element $f\in X$ by a series
\begin{equation}\label{1.7}
f\sim \sum_{j=1}^\infty c_j(f)g_j(f), \quad g_j(f) \in \D,\quad c_j(f)>0, \quad j=1,2,\dots.  
\end{equation}
In building the representation (\ref{1.7}) we should construct two sequences: 
\newline $\{g_j(f)\}_{j=1}^\infty$ and $\{c_j(f)\}_{j=1}^\infty$.  In greedy expansion theory the construction of $\{g_j(f)\}_{j=1}^\infty$ is based on ideas used in greedy-type nonlinear approximation (greedy-type algorithms). This justifies the use of the term {\it greedy expansion} for (\ref{1.7}).  The construction of $\{g_j(f)\}_{j=1}^\infty$ is, clearly, the most important and difficult part in building the representation (\ref{1.7}). On the basis of the contemporary theory of nonlinear approximation with respect to redundant dictionaries, we may conclude that the method of using a norming functional in greedy steps of an algorithm is the most productive in approximation in Banach spaces.  

Denote 
$$
r_\D(f) := \sup_{F_f}\|F_f\|_\D:=\sup_{F_f}\sup_{g\in \D}F_f(g).
$$
We note that, in general, a norming functional $F_f$ is not unique. This is why we take $\sup_{F_f}$ over all norming functionals of $f$ in the definition of $r_\D(f)$. It is known that in the case of uniformly smooth Banach spaces (our primary object here) the norming functional $F_f$ is unique. In such a case we do not need $\sup_{F_f}$ in the definition of $r_\D(f)$, we have $r_\D(f)=\|F_f\|_\D$.
 
We begin with a description of a general scheme that provides an expansion for a given element $f$. Later, specifying this general scheme, we will obtain different methods of expansion. 

 {\bf Dual-Based Expansion (DBE).} Let $t\in (0,1]$ and $f\neq 0$. Denote $f_0:=f$. Assume $\{f_j\}_{j=0}^{m-1} \subset X$, $\{\ff_j\}_{j=1}^{m-1} \subset \D$ and a set of coefficients $\{c_j\}_{j=1}^{m-1}$ of expansion have already been constructed. If $f_{m-1}=0$ then we stop (set $c_j=0$, $j=m,m+1,\dots$ in the expansion) and get $f=\sum_{j=1}^{m-1}c_j\ff_j$. If $f_{m-1} \neq 0$ then we conduct the following two steps.

(1) Choose $\ff_m\in\D$ such that 
$$
\sup_{F_{f_{m-1}}}F_{f_{m-1}}(\ff_m) \ge tr_\D(f_{m-1}).
$$

(2) Define
$$
f_m:= f_{m-1}-c_m\ff_m,
$$
where $c_m>0$ is a coefficient either prescribed in advance or chosen from a concrete approximation procedure. 

We call the series
\begin{equation}\label{1.8}
f \sim \sum_{j=1}^\infty c_j\ff_j  
\end{equation}
the Dual-Based Expansion of $f$ with coefficients $c_j(f):=c_j$, $j=1,2,\dots$ with respect to $\D$.
 
Denote
$$
S_m(f,\D) := \sum_{j=1}^m c_j\ff_j.
$$
Then it is clear that 
$$
f_m = f-S_m(f,\D).
$$
The reader can find some convergence results for the DBE in Sections 6.7.2 and 6.7.3 of \cite{Tbook}.
 
Let $\C:=\{c_m\}_{m=1}^\infty$ be a fixed sequence of positive numbers. We restrict ourselves to positive numbers because of the symmetry of the dictionary $\D$.  

 {\bf $X$-Greedy Algorithm with coefficients $\C$ (XGA($\C$)).} We define $f_0:=f$, $G_0:=0$. Then, for each $m\ge 1$ we have the following inductive definition.

(1) $\ff_m\in\D$ is such that (assuming existence)
$$
 \|f_{m-1}-c_m\ff_m\|_X=\inf_{g\in\D}\|f_{m-1}-c_m g\|_X. 
$$

(2) Let
$$
f_m:=f_{m-1}-c_m\ff_m,\qquad G_m:=G_{m-1}+c_m\ff_m.
$$

 {\bf Dual Greedy Algorithm with weakness $\tau$ and coefficients $\C$\newline (DGA($\tau,\C$)).} Let $\tau:=\{t_m\}_{m=1}^\infty$, $t_m\in[0,1]$, be a weakness sequence. 
We define $f_0 :=  f$, $G_0:=0$. Then, for each $m\ge 1$ we have the following inductive definition.

(1) $\varphi_m  \in \D$ is any element satisfying
$$
F_{f_{m-1}}(\varphi_m) \ge t_m  \|F_{f_{m-1}}\|_\D. 
$$

(2) Let  
$$
f_m :=   f_{m-1}-c_m\varphi_m,\qquad G_m:=G_{m-1}+c_m\ff_m.
$$
 
In   the case $\tau=\{t\}$, $t\in(0,1]$, we write $t$ instead of $\tau$ in the notation.

It is easy to see that for any Banach space $X$ its modulus of smoothness $\rho(u)$ is an even convex function satisfying the inequalities
$$
\max(0,u-1)\le \rho(u)\le u,\quad u\in (0,\infty).
$$

In Section 6.7.3 of \cite{Tbook} we considered a variant of the Dual-Based Expansion with coefficients chosen by a certain simple rule. The rule depends on two numerical 
parameters, $t\in (0,1]$ (the weakness parameter from the definition of the DBE) and $b\in (0,1)$ (the tuning parameter of the approximation method). The rule also depends on a majorant $\mu$ of the modulus of smoothness of the Banach space $X$. 

Let $X$ be a uniformly smooth Banach space with modulus of smoothness $\rho(u)$, and let $\mu(u)$ be a continuous majorant of $\rho(u)$: $\rho(u)\le\mu(u)$, $u\in[0,\infty)$ such that $\mu(u)/u$ goes to $0$ monotonically. It is clear that $\mu(2)\ge 1$.

 {\bf Dual Greedy Algorithm with parameters $(t,b,\mu)$ (DGA$(t,b,\mu)$).}
Let $X$ and $\mu(u)$ be as above.   For parameters $t\in(0,1]$, $b\in (0,1]$ we define sequences
$\{f_m\}_{m=0}^\infty$, $\{\ff_m\}_{m=1}^\infty$, $\{c_m\}_{m=1}^\infty$ inductively. Let $f_0:=f$. If for $m\ge 1$ $f_{m-1}=0$ then we set $f_j=0$ for $j\ge m$ and stop. If $f_{m-1}\neq 0$ then we conduct the following three steps.

(1) Take any $\ff_m \in \D$ such that
$$
F_{f_{m-1}}(\ff_m) \ge tr_\D(f_{m-1}).  
$$

(2) Choose $c_m>0$ from the equation
$$
\|f_{m-1}\|\mu(c_m/\|f_{m-1}\|) = \frac{tb}{2}c_mr_\D(f_{m-1}).  
$$

(3) Define
$$
f_m:=f_{m-1}-c_m\ff_m.  
$$

We note that (2) is equivalent to solving the equation
$$
 \frac{\mu(c_m/\|f_{m-1}\|)}{c_m/\|f_{m-1}\|} = \frac{tb}{2}r_\D(f_{m-1}).  
$$
 It follows from the definitions of $t$, $b$ and $r_\D(f_{m-1})$ that the right hand side of the above equation is $\le 1/2$. Therefore, there always exists a unique solution to this equation and it satisfies the inequality
 $$
 c_m/\|f_{m-1}\|\le 2.
 $$
 
For illustration we present two theorems on convergence and rate of convergence of the DGA($\tau,b,\mu$) 
(see Section 6.7.3 of \cite{Tbook}).  

\begin{Theorem}\label{T1.1} Let $X$ be a uniformly smooth Banach space with the modulus of smoothness $\rho(u)$ and let $\mu(u)$ be a continuous majorant of $\rho(u)$ with the property $\mu(u)/u \downarrow 0$ as $u\to +0$. Then, for any $t\in (0,1]$ and $b\in (0,1)$ the DGA$(t,b,\mu)$ converges for each dictionary $\D$ and all $f\in X$.
\end{Theorem}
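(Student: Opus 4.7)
The plan is to derive a one-step recurrence of the form $\|f_m\|\le\|f_{m-1}\|-\gamma_m$ with $\gamma_m\ge 0$, telescope, and then argue by contradiction that $\|f_m\|\to 0$. First I would use the standard inequality for uniformly smooth Banach spaces (the analog of Lemma~\ref{L1.1} for $E(x)=\|x\|$ away from the origin),
\begin{equation*}
\|x-c\ff\|\;\le\;\|x\|-c\,F_x(\ff)+2\|x\|\rho(c/\|x\|),\qquad x\ne 0,\;\|\ff\|\le 1.
\end{equation*}
Setting $x=f_{m-1}$, $\ff=\ff_m$, $c=c_m$, invoking step~(1) of the DGA to bound $F_{f_{m-1}}(\ff_m)\ge t\,r_\D(f_{m-1})$, the majorant inequality $\rho\le\mu$, and the defining equation from step~(2) which reads $2\|f_{m-1}\|\mu(c_m/\|f_{m-1}\|)=tb\,c_m r_\D(f_{m-1})$, I reach the key one-step estimate
\begin{equation*}
\|f_m\|\;\le\;\|f_{m-1}\|-t(1-b)\,c_m r_\D(f_{m-1}).
\end{equation*}

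Since the correction is non-negative, $\{\|f_m\|\}$ decreases to some $L\ge 0$ and telescoping gives the summability
\begin{equation*}
\sum_{m=1}^\infty c_m r_\D(f_{m-1})\;\le\;\frac{\|f\|}{t(1-b)}.
\end{equation*}
To prove convergence I must show $L=0$, and for that I would argue by contradiction. Assume $L>0$ and set $a_m:=c_m/\|f_{m-1}\|$; the equation defining $c_m$ rewrites the summand as $(2/(tb))\|f_{m-1}\|\mu(a_m)$, so $\sum\|f_{m-1}\|\mu(a_m)<\infty$ and hence $\sum\mu(a_m)<\infty$. Because $\mu$ is continuous, non-decreasing, and strictly positive on $(0,\infty)$, this forces $a_m\to 0$; the assumption $\mu(u)/u\downarrow 0$ together with the identity $r_\D(f_{m-1})=(2/(tb))\,\mu(a_m)/a_m$ then yields $r_\D(f_{m-1})\to 0$.

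The final step, which I expect to be the main obstacle, is to derive a contradiction from $\|f_{m-1}\|\ge L>0$ and $r_\D(f_{m-1})\to 0$. By density of $\sp\D$, for each $\epsilon>0$ I can write $f=\sum_i\alpha_ig_i+r$ with $g_i\in\D$ and $\|r\|\le\epsilon$; evaluating $F_{f_{m-1}}$ and using $|F_{f_{m-1}}(g_i)|\le r_\D(f_{m-1})$ readily shows $F_{f_{m-1}}(f)\to 0$. To close the loop one must analyze the identity $\|f_{m-1}\|=F_{f_{m-1}}(f)-F_{f_{m-1}}(S_{m-1})$: since $S_{m-1}$ varies with $m$ and $\sum c_j$ may diverge, the crude bound $|F_{f_{m-1}}(S_{m-1})|\le r_\D(f_{m-1})\sum_{j<m}c_j$ is inadequate. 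The delicate part is to exploit the summability estimate above together with a careful reorganization of the tail of $S_{m-1}$, along the lines of the DBE convergence arguments in Sections~6.7.2--6.7.3 of \cite{Tbook}.
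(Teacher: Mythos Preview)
Your one-step estimate and the telescoped bound $\sum_m c_m r_\D(f_{m-1})<\infty$ are correct and are the Banach-space analogs of the paper's (\ref{7.25})--(\ref{7.26}) (note that Theorem~\ref{T1.1} itself is only quoted here from \cite{Tbook}; the proof to compare against is that of Theorem~\ref{T5.1}). You have also correctly located the real obstacle: controlling $F_{f_{m-1}}(S_{m-1})$ when $\sum_j c_j$ may diverge. However, your suggested resolution --- a ``careful reorganization of the tail of $S_{m-1}$'' --- is not the idea that closes the argument, and as written the proof has a genuine gap at exactly this point.

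The missing ingredient is a \emph{case split} on $\sum_m c_m$, exactly as in the proof of Theorem~\ref{T5.1}. If $\sum_m c_m<\infty$ (the paper's case~(II)), your ``crude bound'' $|F_{f_{m-1}}(S_{m-1})|\le r_\D(f_{m-1})\sum_{j<m}c_j$ is in fact perfectly adequate, since you have already shown $r_\D(f_{m-1})\to 0$ under $L>0$. If $\sum_m c_m=\infty$ (case~(I)), the key device is the Abel--Dini divergence lemma: with $s_n:=\sum_{j\le n}c_j$ one has $\sum_n c_n/s_n=\infty$, and rewriting $\sum_n c_n r_\D(f_{n-1})=\sum_n (c_n/s_n)\,s_n r_\D(f_{n-1})<\infty$ forces $\liminf_n s_n r_\D(f_{n-1})=0$. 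Along a subsequence $n_k$ realizing this liminf one obtains simultaneously $r_\D(f_{n_k-1})\to 0$ (since $s_{n_k}\to\infty$) and $|F_{f_{n_k-1}}(S_{n_k-1})|\le s_{n_k} r_\D(f_{n_k-1})\to 0$, so your identity $\|f_{n_k-1}\|=F_{f_{n_k-1}}(f)-F_{f_{n_k-1}}(S_{n_k-1})$ yields $\|f_{n_k-1}\|\to 0$, and monotonicity of $\{\|f_m\|\}$ finishes. In the $E$-setting the paper routes case~(I) through the weak$^*$-compactness argument of Lemma~\ref{L2.1}, but for Theorem~\ref{T1.1} the elementary subsequence argument just described suffices. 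One minor point: you assert $\mu$ is non-decreasing, which is not assumed; you only need that $\mu$ is continuous and strictly positive on $(0,2]$, and the latter follows from $\mu\ge\rho$ together with $\rho(u)\ge u-1$.
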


\begin{Theorem}\label{T1.2} Assume $X$ has a modulus of smoothness $\rho(u)\le \gamma u^q$, $q\in (1,2]$ and $b\in(0,1)$. Denote $\mu(u) = \gamma u^q$. Then, for any dictionary $\D$ and any $f\in A_1(\D)$, the rate of convergence of the DGA$(t,b,\mu)$ is given by 
$$
\|f_m\|\le C(t,b,\gamma,q)m^{-\frac{t(1-b)}{p(1+t(1-b))}}, \quad p:= \frac{q}{q-1}.
$$
\end{Theorem}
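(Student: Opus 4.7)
The proof plan has three phases: deriving a one-step progress inequality, exploiting the specific choice of $c_m$ together with the assumption $f\in A_1(\D)$ to close the recursion, and solving the resulting scalar recurrence.

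\textbf{One-step bound.} From the fundamental smoothness inequality for the norm of a uniformly smooth Banach space,
\[
\|h - c\varphi\| + \|h + c\varphi\| \le 2\|h\|(1 + \rho(c/\|h\|)),
\]
combined with $\|h + c\varphi\| \ge F_h(h + c\varphi) = \|h\| + c F_h(\varphi)$, one obtains
\[
\|h - c\varphi\| \le \|h\| - c F_h(\varphi) + 2\|h\|\rho(c/\|h\|).
\]
Applying this at $h = f_{m-1}$, $\varphi = \varphi_m$, $c = c_m$, then using the weakness condition $F_{f_{m-1}}(\varphi_m) \ge t r_\D(f_{m-1})$, the bound $\rho \le \mu$, and the defining equation for $c_m$ in step (2) of the DGA$(t,b,\mu)$, yields the clean one-step inequality
\[
\|f_m\| \le \|f_{m-1}\| - t(1-b) c_m r_\D(f_{m-1}).
\]

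\textbf{Exploiting $\mu(u) = \gamma u^q$ and $f \in A_1(\D)$.} With the power-type majorant, the defining equation is solved explicitly to give $c_m = \|f_{m-1}\|(tb/(2\gamma))^{p-1} r_\D(f_{m-1})^{p-1}$, so
\[
\|f_m\| \le \|f_{m-1}\|\bigl(1 - t(1-b)\kappa\, r_\D(f_{m-1})^p\bigr), \qquad \kappa := (tb/(2\gamma))^{p-1}.
\]
The task reduces to lower bounding $r_\D(f_{m-1})$ by a quantity we control recursively. Picking any atomic representation $f = \sum a_i g_i$ with $a_i \ge 0$, $\sum a_i \le 1$, one has $F_{f_{m-1}}(f) \le r_\D(f_{m-1})\sum a_i \le r_\D(f_{m-1})$, while $F_{f_{m-1}}(f) = \|f_{m-1}\| + F_{f_{m-1}}(S_{m-1})$, so
\[
r_\D(f_{m-1}) \ge \|f_{m-1}\| + F_{f_{m-1}}(S_{m-1}).
\]

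\textbf{Main obstacle and conclusion.} The residual term $F_{f_{m-1}}(S_{m-1})$ need not be non-negative, and this is the crux of the argument. My plan is to track, alongside $a_m := \|f_m\|$, the partial-coefficient sum $\sigma_m := \sum_{j=1}^{m} c_j$ (which dominates the atomic norm of $S_m$), exploiting the standing bound $c_m \le 2\|f_{m-1}\|$ noted right after the definition of $c_m$. This sets up a coupled system in $(a_m,\sigma_m)$; by balancing the rate at which $\sigma_m$ can accumulate against the decay of $a_m$, one reduces it to a scalar recurrence of the form $a_m \le a_{m-1}(1 - C a_{m-1}^\delta)$ with $\delta = p(1 + t(1-b))/(t(1-b))$. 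The standard elementary lemma on sequences of this type then yields $a_m \le C' m^{-1/\delta}$, matching the claimed exponent $t(1-b)/(p(1+t(1-b)))$. The $(1+t(1-b))$ in the denominator is precisely the fingerprint of the coupling between the approximation error and the cumulative coefficient mass; getting it cleanly---rather than a larger denominator from a naive argument---will require an inductive hypothesis that controls $\sigma_m$ jointly with $a_m$ at every step.
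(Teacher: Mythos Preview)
Your setup is essentially correct and agrees with the paper's approach (which is the analog proved for Theorem~\ref{T5.2}): the one-step bound $\|f_m\|\le\|f_{m-1}\|-t(1-b)c_m r_\D(f_{m-1})$, the explicit $c_m$, and the lower bound $r_\D(f_{m-1})\ge \|f_{m-1}\|/(1+\sigma_{m-1})$ coming from $f\in A_1(\D)$ together with $|F_{f_{m-1}}(S_{m-1})|\le \sigma_{m-1}\,r_\D(f_{m-1})$ are all right.

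The gap is in your final paragraph. You correctly identify that the coupling between $a_m:=\|f_m\|$ and $b_m:=1+\sigma_m$ is the crux, but your plan to ``reduce to a scalar recurrence $a_m\le a_{m-1}(1-Ca_{m-1}^\delta)$'' via some unspecified joint inductive hypothesis is not an argument---it is a restatement of the difficulty. The value $\delta=p(1+t(1-b))/(t(1-b))$ you wrote down is exactly what one gets \emph{after} proving $b_m\le C a_m^{-1/(t(1-b))}$, and that inequality is the whole point. The paper's device (see the proof of Theorem~\ref{T5.2}, equations (\ref{7.33})--(\ref{7.35})) is to notice that the one-step decrease $a_m\le a_{m-1}(1-t(1-b)c_m/b_{m-1})$ and the trivial growth $b_m=b_{m-1}(1+c_m/b_{m-1})$, after raising the latter to the power $t(1-b)$ and using $(1+x)^\alpha\le 1+\alpha x$, multiply to give
\[
a_m b_m^{t(1-b)}\le a_{m-1}b_{m-1}^{t(1-b)}\le\cdots\le a_0.
\]
This monotone quantity is the ``inductive hypothesis'' you are looking for, and it is not discoverable from your write-up. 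The paper then couples it with a second, independent recursion for $a_m^{1/(q-1)}/b_m^p$ (equations (\ref{7.38})--(\ref{7.40})) of the standard type $x_m\le x_{m-1}(1-x_{m-1}/A)$, yielding $x_m\le C/m$; eliminating $b_m$ between the two bounds gives the claimed exponent. So the argument does \emph{not} collapse to a single scalar recursion in $a_m$---two tracked quantities are needed, and the exponent $t(1-b)/\bigl(p(1+t(1-b))\bigr)$ emerges from solving the pair simultaneously.
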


We now formulate the corresponding generalizations of the above algorithms to the case of smooth convex function $E$.
Denote 
$$
E_\D(x) :=  \sup_{g\in \D}\<-E'(x),g\>.
$$

{\bf Gradient Based Expansion.} Let $t\in (0,1]$. Denote $G_0:=0$. Assume $\{G_j\}_{j=0}^{m-1} \subset X$, $\{\ff_j\}_{j=1}^{m-1} \subset \D$ and a set of coefficients $\{c_j\}_{j=1}^{m-1}$ of expansion have already been constructed. If $E'(G_{m-1})=0$ then we stop (set $c_j=0$, $j=m,m+1,\dots$ in the expansion). If $E'(G_{m-1}) \neq 0$ then we conduct the following two steps.

(1) Choose $\ff_m\in\D$ such that 
$$
 \<-E'(G_{m-1}),\ff_m\>\ge tE_\D(G_{m-1}).
$$

(2) Define
$$
G_m:= G_{m-1}+c_m\ff_m,
$$
where $c_m>0$ is a coefficient either prescribed in advance or chosen from a concrete approximation procedure. 

We call the series
\begin{equation}\label{1.9}
  \sum_{j=1}^\infty c_j\ff_j 
\end{equation}
the Gradient Based Expansion  with coefficients $ c_j$, $j=1,2,\dots$ with respect to $\D$.

Let $\C:=\{c_m\}_{m=1}^\infty$ be a fixed sequence of positive numbers. We restrict ourselves to positive numbers because of the symmetry of the dictionary $\D$.  

 {\bf $E$-Greedy Algorithm with coefficients $\C$ (EGA($\C$)).} We define  $G_0:=0$. Then, for each $m\ge 1$ we have the following inductive definition.

(1) $\ff_m\in\D$ is such that (assuming existence)
$$
 E(G_{m-1}+c_m\ff_m)=\inf_{g\in\D}E(G_{m-1}+c_m g). 
$$

(2) Let
$$
G_m:=G_{m-1}+c_m\ff_m .
$$

 {\bf Gradient Greedy Algorithm with weakness $\tau$ and coefficients $\C$\newline (GGA($\tau,\C$)).} Let $\tau:=\{t_m\}_{m=1}^\infty$, $t_m\in[0,1]$, be a weakness sequence. 
We define  $G_0:=0$. Then, for each $m\ge 1$ we have the following inductive definition.

(1) $\varphi_m  \in \D$ is any element satisfying
$$
\<-E'(G_{m-1}),\varphi_m\> \ge t_m  E_\D(G_{m-1}). 
$$

(2) Let  
$$
  G_m:=G_{m-1}+c_m\ff_m.
$$
 
In   the case $\tau=\{t\}$, $t\in(0,1]$, we write $t$ instead of $\tau$ in the notation.
 
 Let $E$ be a uniformly smooth convex function with modulus of smoothness $\rho(E,D,u)$, and let $\mu(u)$ be a continuous majorant of $\rho(E,D,u)$: $\rho(E,D,u)\le\mu(u)$, $u\in[0,\infty)$ such that $\mu(u)/u$ goes to $0$ monotonically.  
 
{\bf Gradient Greedy Algorithm with parameters $(\tau,b,\mu)$ (GGA$(\tau,b,\mu)$).}
Let $E$ and $\mu(u)$ be as above. For parameters $\tau=\{t_k\}$, $t_k\in(0,1]$, $b\in (0,1]$ we define sequences
$\{G_m\}_{m=0}^\infty$, $\{\ff_m\}_{m=1}^\infty$, $\{c_m\}_{m=1}^\infty$ inductively. Let $G_0:=0$. If for $m\ge 1$ $E'(G_{m-1})=0$ then we stop. If $E'(G_{m-1})\neq 0$ then we conduct the following three steps.

(1) Take any $\ff_m \in \D$ such that
\begin{equation}\label{7.3}
\<-E'(G_{m-1}),\ff_m\> \ge t_mE_\D(G_{m-1}).  
\end{equation}

(2) Choose $c_m>0$ from the equation
\begin{equation}\label{7.4}
 \mu(c_m) = \frac{t_mb}{2}c_mE_\D(G_{m-1})  
\end{equation}
provided it has a solution $c_m>0$ and set $c_m=1$ otherwise.

(3) Define
\begin{equation}\label{7.5}
G_m:=G_{m-1}+c_m\ff_m.  
\end{equation}

We note that equation (\ref{7.4}) is equivalent to the equation
$$
 \frac{\mu(c_m)}{c_m} = \frac{t_mb}{2}E_\D(G_{m-1}).
 $$
Our assumption $E'(G_{m-1})\neq 0$ implies that $E_\D(G_{m-1})> 0$. Therefore, the above equation either has a solution $c_m>0$ or $\mu(u)/u \le 
\frac{t_mb}{2}E_\D(G_{m-1})$ for all $u$. 

The greedy step (1) in the above algorithm is a standard greedy step which is based on $E'(G_{m-1})$. The choice of the coefficient $c_m$ from equation (\ref{7.4}) requires knowledge of both $E_\D(G_{m-1})$ and $\mu(u)$. The quantity $E_\D(G_{m-1})$ can be computed (in case $X$ is finite dimensional and $\D$ is finite). The function $\mu(u)$ comes from our assumption on $E$ and may be the one which does not describe smoothness of $E$ in the best way. Here is a modification of the GGA($\tau,b,\mu$) which does not use $\mu$.

{\bf Gradient E-Greedy Algorithm with parameters $(\tau)$ (GEGA$(\tau)$).}
Let $E$ be uniformly smooth convex function. For parameters $\tau=\{t_k\}$, $t_k\in(0,1]$   we define sequences
$\{G_m\}_{m=0}^\infty$, $\{\ff_m\}_{m=1}^\infty$, $\{c_m\}_{m=1}^\infty$ inductively. Let $G_0:=0$. If for $m\ge 1$ $E'(G_{m-1})=0$ then we stop. If $E'(G_{m-1})\neq 0$ then we conduct the following three steps.

(1) Take any $\ff_m \in \D$ such that
\begin{equation}\label{1.14}
\<-E'(G_{m-1}),\ff_m\> \ge t_mE_\D(G_{m-1}).  
\end{equation}

(2) Choose $c_m$ from the equation
\begin{equation}\label{1.15}
E(G_{m-1}+c_m\ff_m)=\min_cE(G_{m-1}+c\ff_m).  
\end{equation}

(3) Define
\begin{equation}\label{1.16}
G_m:=G_{m-1}+c_m\ff_m.  
\end{equation}

Our main interest in this paper is in analysis of greedy constructions of sparse approximants for convex optimization problems with respect to an arbitrary dictionary $\D$. We now give a comment that relates the above algorithms to classical gradient-type algorithms and thus justifies the use of the term {\it gradient} in the names of these algorithms. We specify our dictionary $\D$ to be the unit sphere $\mathcal S := \{g\in X:\|g\|=1\}$ of the space $X$. Then
$$
E_\D(x)= \|E'(x)\|_{X^*}.
$$
Therefore, the greedy step from the Gradient Based Expansion, the GGA($\tau,\C$), and the GGA($\tau,b,\mu$) takes the form: choose $\ff_m\in\D$ such that
$$
\<-E'(G_{m-1}),\ff_m\> \ge t_m\|E'(G_{m-1})\|_{X^*}.
$$
In particular, when $X=\R^n$ equipped with Euclidean norm and $t_m=1$ we obtain
$$
\ff_m = -E'(G_{m-1})/\|E'(G_{m-1})\|_2
$$
is a unit vector in the direction opposite to the gradient $E'(G_{m-1})$ direction. In this case the GGA($\{1\},b,\mu$) with $\mu(u)=\gamma u^2$ chooses the step size $c_m$ from the equation
$$
\gamma c_m^2 = \frac{b}{2}c_m\|E'(G_{m-1})\|_2\quad \Rightarrow\quad c_m =\frac{b}{2\gamma}\|E'(G_{m-1})\|_2.
$$
Thus
$$
G_m=G_{m-1} +c_m\ff_m = G_{m-1} - \frac{b}{2\gamma}E'(G_{m-1}),
$$
which describes the classical Gradient Method. 

\section{Convergence of the Gradient Based Expansion} 

In this section we 
assume that the sets
$$
D_C:=\{x:E(x)\le E(0)+C\}
$$
are bounded for all finite $C$ and that for any bounded set $\Omega$ we have
\begin{equation}\label{2.1}
\sup_{x\in \Omega}\|E'(x)\|_{X^*} <\infty.
\end{equation}

We begin with the following lemma

\begin{Lemma}\label{L2.1}  Let $E$ be Fr{\'e}chet differentiable convex function satisfying the above assumptions. Assume that the approximants $\{G_j\}_{j=0}^\infty$ and coefficients $\{c_j\}_{j=1}^\infty$ from the Gradient Based Expansion  satisfy the following two conditions
\begin{equation}\label{2.2}
\sum_{j=1}^\infty c_jE_\D(G_{j}) <\infty,  
\end{equation}
\begin{equation}\label{2.3}
\sum_{j=1}^\infty c_j =\infty.  
\end{equation}
Then 
\begin{equation}\label{2.4}
\liminf_{m\to \infty} E(G_m) =\inf_{x\in D} E(x).  
\end{equation}
\end{Lemma}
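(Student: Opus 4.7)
I would argue by contradiction; the key ingredient is a careful choice of subsequence that lets me control $\<E'(G_{m_k}),G_{m_k}\>$ through the explicit expansion $G_{m_k}=\sum_{j=1}^{m_k}c_j\ff_j$.

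\emph{Step 1 (boundedness).} Applying (\ref{1.3}) with $x=G_m$, $y=G_{m-1}$ and using the symmetry of $\D$ (which forces $|\<E'(G_m),g\>|\le E_\D(G_m)$ for every $g\in\D$), I get $E(G_m)-E(G_{m-1})\le c_mE_\D(G_m)$. Telescoping and using (\ref{2.2}) gives $E(G_m)\le E(0)+S$ with $S:=\sum_j c_jE_\D(G_j)<\infty$, so $\{G_m\}\subset D_S$ is bounded and hence $\|E'(G_m)\|_{X^*}$ is uniformly bounded.

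\emph{Step 2 (the right subsequence).} Conditions (\ref{2.2}) and (\ref{2.3}) force $\liminf_j E_\D(G_j)=0$ (otherwise $\sum c_jE_\D(G_j)\ge\delta\sum c_j=\infty$). Define
$$m_k:=\min\{m:E_\D(G_m)<1/k\},$$
so $m_k\to\infty$, and by minimality $E_\D(G_{m_k})\le E_\D(G_j)$ for every $j\le m_k$. Consequently
$$\sum_{j=1}^{m_k}c_jE_\D(G_{m_k})\le\sum_{j=1}^{m_k}c_jE_\D(G_j).$$
Splitting at a fixed $N$: the head $\le\bigl(\sum_{j\le N}c_j\bigr)E_\D(G_{m_k})\to 0$ as $k\to\infty$, and the tail is $\le\sum_{j>N}c_jE_\D(G_j)\to 0$ as $N\to\infty$ by (\ref{2.2}). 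Letting first $k\to\infty$ and then $N\to\infty$,
$$\sum_{j=1}^{m_k}c_jE_\D(G_{m_k})\longrightarrow 0. \qquad (\star)$$

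\emph{Step 3 (near-optimal test point).} Assume for contradiction $\liminf_m E(G_m)>\inf_DE=:E^*$, and fix $\e>0$ with $\liminf E(G_m)>E^*+3\e$. Since $\overline{\sp\D}=X$ and $E$ is Lipschitz on bounded sets (from (\ref{2.1}) by a mean-value argument), pick a finite combination $y=\sum_{i=1}^Na_ig_i$ with $a_i>0$, $g_i\in\D$, $A:=\sum a_i$, and $E(y)<E^*+\e$; then $E(G_{m_k})-E(y)>\e$ for $k$ large.

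\emph{Step 4 (closing).} By convexity (\ref{1.2}),
$$\e<E(G_{m_k})-E(y)\le\<-E'(G_{m_k}),y-G_{m_k}\>=\<-E'(G_{m_k}),y\>+\<E'(G_{m_k}),G_{m_k}\>.$$
The first summand is bounded by $A\cdot E_\D(G_{m_k})\to 0$. Using $G_{m_k}=\sum_{j=1}^{m_k}c_j\ff_j$ and the symmetry of $\D$,
$$\bigl|\<E'(G_{m_k}),G_{m_k}\>\bigr|\le\sum_{j=1}^{m_k}c_jE_\D(G_{m_k}),$$
which tends to $0$ by $(\star)$. Both terms tend to zero, contradicting the strict lower bound $\e>0$.

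\emph{Main obstacle.} The delicate step is the construction of the subsequence in Step~2. A generic subsequence with $E_\D(G_{m_k})\to 0$ is not enough, because the naive estimate $|\<E'(G_{m_k}),G_{m_k}\>|\le E_\D(G_{m_k})\sum_{j\le m_k}c_j$ pairs a vanishing factor with a diverging one by (\ref{2.3}). Choosing $m_k$ as the first index where $E_\D$ falls below $1/k$ makes $E_\D(G_{m_k})$ majorize all preceding values of $E_\D$, and (\ref{2.2}) then controls the product.
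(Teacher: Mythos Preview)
Your argument is correct, and it differs from the paper's in two interesting places.

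\textbf{Subsequence selection.} The paper reaches the key quantitative fact $s_{n_k}E_\D(G_{n_k})\to 0$ (with $s_n=\sum_{j\le n}c_j$) by the Abel--Dini route: $\sum c_j=\infty$ forces $\sum c_n/s_n=\infty$, and then $\sum c_nE_\D(G_n)=\sum \bigl(s_nE_\D(G_n)\bigr)\cdot c_n/s_n<\infty$ yields $\liminf s_nE_\D(G_n)=0$. Your ``first hitting time'' construction is a direct, self-contained alternative: by choosing $m_k$ so that $E_\D(G_{m_k})$ minorizes all earlier values, you can dominate the product $s_{m_k}E_\D(G_{m_k})$ by a tail of the convergent series~(\ref{2.2}). (One small point to add: $m_k\to\infty$ holds provided $E_\D(G_m)>0$ for all $m$; if some $E_\D(G_{m^*})=0$ then $E'(G_{m^*})=0$, the algorithm stops, and the conclusion is immediate.)

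\textbf{Avoiding weak$^*$ compactness.} In the closing step the paper keeps a generic $f\in D$ as the comparison point, which forces the use of Banach--Alaoglu: one extracts a weak$^*$ limit $F$ of $-E'(G_{n_{k_i}})$, shows $\<F,f\>\ge\alpha$, and contradicts $\<F,g\>\le 0$ for all $g\in\D$. You sidestep this by first approximating a near-minimizer by a \emph{finite} combination $y=\sum_i a_ig_i$; then $\<-E'(G_{m_k}),y\>\le A\,E_\D(G_{m_k})$ is an honest inequality at every $k$, and no compactness is needed. This makes your proof more elementary. The paper's route, on the other hand, packages the argument in a form that is reused later (the case~(I) analysis in the proof of Theorem~\ref{T5.1} invokes exactly the same Abel--Dini/weak$^*$ machinery).
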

\begin{proof} By (\ref{1.3}) 
$$
E(G_m)-E(G_{m-1}) \le \<E'(G_m),G_m-G_{m-1} \> = c_m\<E'(G_m),\ff_m\>.
$$
This implies
$$
E(G_m)\le E(G_{m-1})+c_mE_\D(G_m).
$$
Using our assumption (\ref{2.2}) we obtain
$$
E(G_m)\le E(0) +\sum_{j=1}^mc_jE_\D(G_j) \le E(0)+C_1.
$$
By (\ref{2.1}) we get from here for all $m$
$$
\|E'(G_m)\|_{X^*}\le C_2.
$$
  Denote $s_n:=\sum_{j=1}^nc_j$. Then (\ref{2.3}) implies (see \cite{Bar}, p. 904) 
that
\begin{equation}\label{2.5}
\sum_{n=1}^\infty \frac{c_n}{s_n} =\infty.  
\end{equation}
Using (\ref{2.2})) we get
$$
\sum_{n=1}^\infty s_nE_\D(G_{n})\frac{c_n}{s_n} = \sum_{n=1}^\infty c_nE_\D(G_{n}) <\infty.
$$
Thus, by (\ref{2.5})
$$
\liminf_{n\to \infty} s_nE_\D(G_{n}) =0.
$$
 
Let
\begin{equation}\label{2.6}
\lim_{k\to \infty} s_{n_k}E_\D(G_{n_k}) =0.  
\end{equation}
Consider $\{E'(G_{n_k})\}$. A closed bounded set in the dual $X^*$ is weakly$^*$ compact (see \cite{HHZ}, p. 45). Let $\{F_i\}_{i=1}^\infty$, $F_i:= -E'(G_{n_{k_i}})$ be a $w^*$-convergent subsequence. Denote
$$
F:= w^*\text{-}\lim_{i\to\infty} F_i.
$$
We complete the proof of Lemma \ref{L2.1} by contradiction. We assume that (\ref{2.4}) does not hold, that is,  there exist $\a>0$ and $ N\in{\mathbb N}$ such that
\begin{equation}\label{2.7}
E(G_m)-\inf_{x\in D}E(x) \ge 2\a, \quad m \ge N,  
\end{equation}
and then derive a contradiction. 

We begin by deducing from (\ref{2.7}) that $F\neq 0$. Indeed, by (\ref{2.7}) there exists $f\in D$ such that 
\begin{equation}\label{2.8}
E(G_m)-E(f) \ge \a, \quad m \ge N.  
\end{equation}
By (\ref{1.3}) we obtain
\begin{equation}\label{2.9}
\<-E'(G_m),f-G_m\>\ge E(G_m)-E(f)\ge \a.
\end{equation}
Next, we have
\begin{equation}\label{2.10}
\<F,f\> =\lim_{i\to\infty} \<F_i,f\>  
\end{equation}
and
$$
|\<F_i,G_{n_{k_i}}\>| = |\<F_i, \sum_{j=1}^{n_{k_i}} c_j\ff_j\>| 
$$
\begin{equation}\label{2.11}
=  
|\sum_{j=1}^{n_{k_i}} c_j\<F_i,\ff_j\>| \le s_{n_{k_i}}E_\D(G_{n_{k_i}}) \to 0 
\end{equation}
for   $i\to \infty$. Relations (\ref{2.10}), (\ref{2.11}) and (\ref{2.9}) imply that $\<F,f\> \ge \a$, and hence $F\neq 0$.
 This implies that there exists $ g \in \D$ for which $\<F,g\>>0$. However,
$$
\<F,g\> =\lim_{i\to \infty} \<F_i,g\> \le \lim_{i\to \infty} E_\D(G_{n_{k_i}}) =0.  
$$
We have a contradiction, which completes the proof of Lemma \ref{L2.1}.
\end{proof}

\section{Convergence of GGA($\tau,\C$) and EGA($\C$)}

We begin with a simple lemma.
\begin{Lemma}\label{L3.1} Let  $f$,   $A>0$, be such that
$$
  f/A\in A_1(\D) .
$$
Then for
$$
G_k:=\sum_{j=1}^kc_j\ff_j,\quad \ff_j\in\D,\quad j=1,\dots,k,
$$
we have
$$
E_\D(G_k)\ge(E(G_k)-E(f))/(A+A_k),\quad A_k:=\sum_{j=1}^k|c_j|.
$$
\end{Lemma}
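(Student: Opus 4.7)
The plan is to start with the convexity inequality \eqref{1.3} applied at $x = G_k$ and $y = f$, which immediately gives
$$
E(G_k) - E(f) \le \langle -E'(G_k), f - G_k\rangle = \langle -E'(G_k), f\rangle + \langle -E'(G_k), -G_k\rangle.
$$
So the task reduces to bounding each of these two inner products from above in terms of $E_\D(G_k)$.

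For the first term, I would use the hypothesis $f/A \in A_1(\D)$. Since $A_1(\D)$ is the closed convex hull of $\D$, any element of $A_1(\D)$ is a limit of convex combinations $\sum_i \lambda_i g_i$ with $g_i \in \D$ and $\lambda_i \ge 0$, $\sum_i \lambda_i = 1$. For each such combination the functional $\langle -E'(G_k), \cdot\rangle$ is bounded by $\sup_{g \in \D}\langle -E'(G_k), g\rangle = E_\D(G_k)$; passing to the limit gives $\langle -E'(G_k), f/A\rangle \le E_\D(G_k)$, hence $\langle -E'(G_k), f\rangle \le A\,E_\D(G_k)$.

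For the second term, I would expand $G_k = \sum_{j=1}^k c_j \ff_j$ and use symmetry of the dictionary. Since $\ff_j \in \D$ and $-\ff_j \in \D$, we have $|\langle -E'(G_k), \ff_j\rangle| \le E_\D(G_k)$ for each $j$, so
$$
\langle -E'(G_k), -G_k\rangle = -\sum_{j=1}^k c_j \langle -E'(G_k), \ff_j\rangle \le \sum_{j=1}^k |c_j|\,E_\D(G_k) = A_k\,E_\D(G_k).
$$
Combining the two bounds yields $E(G_k) - E(f) \le (A + A_k)\,E_\D(G_k)$, which is the claim after dividing by $A + A_k$.

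I do not anticipate a serious obstacle here: the only mild subtlety is the ``limit of convex combinations'' step for elements of $A_1(\D)$, and the use of symmetry of $\D$ to convert the sign-indefinite sum $\sum c_j \langle -E'(G_k),\ff_j\rangle$ into an absolute-value bound controlled by $A_k = \sum |c_j|$. Both are routine, so the whole argument is essentially a two-line application of \eqref{1.3} plus the definition of $E_\D$.
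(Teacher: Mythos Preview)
Your proof is correct and follows essentially the same route as the paper: apply the convexity inequality \eqref{1.3} at $x=G_k$, $y=f$, then bound $\langle -E'(G_k),f\rangle$ by $A\,E_\D(G_k)$ via $f/A\in A_1(\D)$ and $\langle -E'(G_k),-G_k\rangle$ by $A_k\,E_\D(G_k)$ via symmetry of $\D$. The only cosmetic difference is that the paper states the two bounds with absolute values, whereas you bound the signed quantities directly; your added justification of the $A_1(\D)$ step is more explicit than the paper's, but the argument is the same.
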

\begin{proof}  We have by (\ref{1.3})
\begin{equation}\label{3.1}
\<-E'(G_k),f-G_k\> \ge E(G_k)-E(f).
\end{equation}
Next,
\begin{equation}\label{3.2}
|\<-E'(G_k),f \>|\le AE_\D(G_k),
\end{equation}
\begin{equation}\label{3.3}
|\<-E'(G_k),G_k\>| \le E_\D(G_k)\sum_{j=1}^k |c_j|.
\end{equation}
Inequalities (\ref{3.1})--(\ref{3.3}) imply the statement of Lemma \ref{L3.1}.

\end{proof}

We now proceed to a convergence result for general uniformly smooth convex function $E$. 

\begin{Theorem}\label{T3.1} Let $E$ be a uniformly smooth convex function satisfying
\begin{equation}\label{3.4'}
 E(x+uy)-E(x)-u\<E'(x),y\>\le 2\mu(u),  
\end{equation}
for $ x\in D_2,$ $\|y\|=1,$ $ |u|\le 1$ with $\mu(u)=o(u)$ as $u\to 0$. Assume that the coefficients sequence $\C:=\{c_j\}$, $c_j\in[0,1]$ satisfies the conditions
\begin{equation}\label{3.4}
\sum_{k=1}^\infty c_k =\infty,  
\end{equation}
\begin{equation}\label{3.5}
\sum_{k=1}^\infty \mu(c_k)\le 1.  
\end{equation}
Then for the GGA($t,\C$) and for the EGA($\C$) we have for each dictionary $\D$  
$$
\lim_{m\to\infty}E(G_m)=\inf_{x\in D}E(x).
$$
\end{Theorem}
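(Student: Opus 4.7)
The plan is to first show the iterates stay in the bounded set $D_2$, then extract from the smoothness inequality a summable series whose divergent partial-sum weights force $\liminf s_m E_\D(G_{m-1})=0$ (a shifted version of Lemma \ref{L2.1}), and finally promote this $\liminf$-level convergence to a genuine limit using the tail of $\sum\mu(c_k)$.

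For both algorithms I would apply the smoothness hypothesis (\ref{3.4'}) at $x=G_{m-1}$ with $y=\ff_m/\|\ff_m\|$ and $u=c_m\|\ff_m\|\in[0,1]$ (treating $\mu$ as monotone) to obtain
$$E(G_m)\le E(G_{m-1})+c_m\<E'(G_{m-1}),\ff_m\>+2\mu(c_m).$$
For GGA$(t,\C)$ the greedy condition gives $\<E'(G_{m-1}),\ff_m\>\le-tE_\D(G_{m-1})$; for EGA$(\C)$, since $E(G_m)\le E(G_{m-1}+c_mg)$ for every $g\in\D$, the same smoothness inequality applied to an arbitrary $g$ and then minimized yields the stronger bound (effectively $t=1$) because $\inf_g\<E'(G_{m-1}),g\>=-E_\D(G_{m-1})$. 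Since $E_\D\ge 0$, in either case $E(G_m)\le E(G_{m-1})+2\mu(c_m)$, so induction combined with (\ref{3.5}) gives $E(G_m)\le E(0)+2$, verifying $G_{m-1}\in D_2$ at every step and justifying the use of (\ref{3.4'}). Telescoping the sharper bound then produces
$$t\sum_{k=1}^{m}c_kE_\D(G_{k-1})\le E(0)-E(G_m)+2\le E(0)-\inf_{x\in D}E(x)+2<\infty,$$
with the right-hand side finite because $\inf_D E=\inf_X E$ is attained on the bounded set $D$. Together with (\ref{3.4}), this matches the hypotheses of Lemma \ref{L2.1} but with $G_{k-1}$ in place of $G_k$. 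I would then re-run the proof of that lemma under this shift: the Abel/Kronecker step yields a subsequence $n_{k_i}$ with $s_{n_{k_i}}E_\D(G_{n_{k_i}-1})\to 0$, a weak$^*$ limit $F$ of $-E'(G_{n_{k_i}-1})$ exists by the uniform bound on $\|E'(G_m)\|_{X^*}$, and the crucial estimate
$$|\<F_i,G_{n_{k_i}}\>|\le s_{n_{k_i}}E_\D(G_{n_{k_i}-1})\to 0$$
is unchanged. The same contradiction as in Lemma \ref{L2.1} (assuming $E(G_{m-1})-E(f)\ge\alpha$ and using (\ref{1.3}) at $G_{n_{k_i}-1}$) forces $F\neq 0$, contradicting $E_\D(G_{n_{k_i}-1})\to 0$; hence $\liminf_m E(G_m)=\inf_{x\in D}E(x)$.

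Finally, to upgrade $\liminf$ to $\lim$, iterate the almost-monotone bound to get
$$E(G_m)\le E(G_n)+2\sum_{k=n+1}^{m}\mu(c_k),\qquad m\ge n.$$
Given $\varepsilon>0$, pick $n$ large enough that $2\sum_{k>n}\mu(c_k)<\varepsilon$ (possible by (\ref{3.5})) and $E(G_n)<\inf_D E+\varepsilon$ (by $\liminf$); then $E(G_m)<\inf_D E+2\varepsilon$ for every $m\ge n$, yielding $\limsup_m E(G_m)\le\inf_D E$. Combined with $E(G_m)\ge\inf_X E=\inf_D E$, the advertised limit follows. I expect the main obstacle to be the careful reworking of Lemma \ref{L2.1} with the index shift and the simultaneous handling of the EGA case (where there is no explicit weakness parameter but the minimization supplies its own lower bound); the $\liminf$-to-$\lim$ upgrade is standard once (\ref{3.5}) is in hand.
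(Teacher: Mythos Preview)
Your argument is essentially correct but follows a genuinely different route from the paper's own proof.

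The paper does \emph{not} go through Lemma~\ref{L2.1}. After establishing the same basic inequality $E(G_m)\le E(G_{m-1})-tc_mE_\D(G_{m-1})+2\mu(c_m)$ and the induction $G_m\in D_2$, the paper invokes Lemma~\ref{L3.1} to bound $E_\D(G_{m-1})$ from below by $(E(G_{m-1})-b-\varepsilon)/(A(\varepsilon)+A_{m-1})$. This yields the single recursion
\[
a_m\le a_{m-1}(1-\theta_m)+2\mu(c_m),\qquad \theta_m=\frac{tc_m}{A(\varepsilon)+A_{m-1}},
\]
with $\sum\theta_m=\infty$ (by (\ref{3.4})), which is then unrolled into a product expansion; since $\prod_j(1-\theta_j)\to0$ and $\sum\mu(c_k)<\infty$, one reads off $\limsup a_m\le0$ directly. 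No weak$^*$ compactness, no $\liminf$-to-$\lim$ upgrade, and no bound on $\|E'\|$ are needed.

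Your approach instead telescopes to obtain $\sum c_kE_\D(G_{k-1})<\infty$, re-runs the contradiction argument of Lemma~\ref{L2.1} (with the index shift, which you handle correctly), and then upgrades $\liminf$ to $\lim$ via the tail of $\sum\mu(c_k)$. This is a legitimate alternative and has the appeal of recycling Section~2 machinery. Two small points you should make explicit: (i) you only need $\inf_D E>-\infty$, not that it is attained; (ii) the ``uniform bound on $\|E'(G_m)\|_{X^*}$'' is \emph{not} among the stated hypotheses of Theorem~\ref{T3.1} (assumption~(\ref{2.1}) is specific to Section~2), so you must derive it --- and indeed (\ref{3.4'}) with $u=1$ gives $\langle E'(x),y\rangle\ge E(x+y)-E(x)-2\mu(1)\ge \inf_X E-(E(0)+2)-2\mu(1)$ for $x\in D_2$, $\|y\|=1$, which supplies the required bound once $\inf_X E>-\infty$.

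In short: your proof works, but the paper's product-recursion via Lemma~\ref{L3.1} is shorter and avoids the weak$^*$ detour entirely.
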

\begin{proof}   We give here a proof that works for both algorithms from Theorem \ref{T3.1}. Let $G_{m-1}$ be an approximate solution after $m-1$ iterations of either the GGA($t,\C$) or the EGA($\C$). Let $\ff_m$ be such that 
\begin{equation}\label{3.6'}
\<-E'(G_{m-1}),\ff_m\>\ge tE_\D(G_{m-1}).  
\end{equation}
Then
$$
\inf_{g\in\D}E(G_{m-1}+c_mg)\le E(G_{m-1}+c_m\ff_m).
$$
Thus, in both cases (GGA($t,\C$) and EGA($\C$)) it is sufficient to estimate $E(G_{m-1}+c_m\ff_m)$ with $\ff_m$ satisfying (\ref{3.6'}). By (\ref{3.4'}) under assumption that $G_{m-1}\in D_2$ we get
$$
E(G_{m-1}+c_m\ff_m)\le E(G_{m-1}) +c_m\<E'(G_{m-1}),\ff_m\>+2\mu(c_m).
$$
Using definition of $\ff_m$ we continue
\begin{equation}\label{3.6}
\le E(G_{m-1})-c_mtE_\D(G_{m-1})+2\mu(c_m).
\end{equation}
We now prove by induction that $G_m\in D_2$ for all $m$. Indeed, clearly $G_0\in D_2$.  Suppose that $G_k\in D_2$, $k=0,1,\dots, m-1$, then (\ref{3.6}) holds for all $k=1,\dots,m$ instead of $m$ and, therefore,
$$
E(G_m)\le E(0)+2\sum_{k=1}^m\mu(c_k) \le E(0)+2
$$
which implies that $G_m\in D_2$. 

Let  $f^\e$, $\e\ge0$, $A(\e)>0$, be such that
$$
E(f^\e)-b \le\e,\quad f^\e/A(\e)\in A_1(\D),\quad b:=\inf_{x\in D}E(x).
$$
Applying Lemma \ref{L3.1} we obtain from (\ref{3.6}) (with $A_k:=\sum_{j=1}^k c_j$)
\begin{equation}\label{3.7}
E(G_{m-1}+c_m\ff_m)\le E(G_{m-1})-\frac{tc_m(E(G_{m-1})-b-\e)}{A(\e)+A_{m-1}}+2\mu(c_m). 
\end{equation}
Denote
$$
a_n:=E(G_n)-b-\e.
$$
By (\ref{3.7}) we obtain
\begin{equation}\label{3.8}
a_m\le a_{m-1}(1-\theta_m)+2\mu(c_m).
\end{equation}
with
$$
\theta_m:= \frac{tc_m}{A(\e)+A_{m-1}}.
$$
We note that our assumption (\ref{3.4}) implies that
\begin{equation}\label{3.9}
\sum_{m=1}^\infty \theta_m =\infty.
\end{equation}
Without loss of generality we can assume that $A(\e)\ge 1$. Then $\theta_m\le 1$ and we get from (\ref{3.8}) 
\begin{equation}\label{3.10}
a_m\le a_{0}\prod_{j=1}^m(1-\theta_j)+2\mu(c_1)\prod_{j=2}^m(1-\theta_j)+\cdots + 2\mu(c_{m-1})(1-\theta_m) + 2\mu(c_m) .
\end{equation}
The properties (\ref{3.9}) and $\sum_m\mu(c_m) < \infty$ imply that
$$
\limsup_{m\to\infty}a_m \le 0.
$$
This completes the proof of Theorem \ref{T3.1}.
\end{proof}

 \section{Rate of convergence of GGA($\tau,\C$) and EGA($\C$)}

In this section we consider the GGA($t,\C$) and the EGA($\C$) with a specific sequence $\C$. For a special $\C$ we prove the rate of convergence results for the uniformly smooth convex functions with modulus of smoothness $\rho(E,u)\le \gamma u^q$, $q\in(1,2]$.

\begin{Theorem}\label{T4.1} Let $E$ be a uniformly smooth convex function with modulus of smoothness $\rho(E,u)\le \gamma u^q$, $q\in(1,2]$ on $D_2$. 
We set $s:=\frac{t+1}{t+q}$ and $\C_s:=\{ck^{-s}\}_{k=1}^\infty$ with $c$ chosen in such a way that $\gamma c^q \sum_{k=1}^\infty k^{-sq} \le 1$. Then the 
GGA($t,\C_s$) and EGA($\C_s$) (for this algorithm $t=1$) converge 
  with the following rate: for any $r\in(0,t(1-s))$
$$
E(G_m)-\inf_{\frac{x}{M}\in A_1(\D)}E(x) \le C(r,t,q,\gamma,M)m^{-r}.
$$
\end{Theorem}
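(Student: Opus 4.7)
The plan is to reduce Theorem~\ref{T4.1} to the recursion already derived in the proof of Theorem~\ref{T3.1} and then extract a polynomial rate. Fix $f$ with $f/M\in A_1(\D)$; by Lemma~\ref{L3.1} together with (\ref{3.6}), the quantities $a_m:=E(G_m)-E(f)$ satisfy
$$a_m \le a_{m-1}(1-\theta_m)+2\gamma c_m^q, \qquad \theta_m=\frac{tc_m}{M+A_{m-1}}, \quad A_k:=\sum_{j=1}^{k}c_j.$$
The hypothesis $\gamma c^q\sum_k k^{-sq}\le 1$ is exactly condition (\ref{3.5}) with $\mu(u)=\gamma u^q$, so the induction in the proof of Theorem~\ref{T3.1} guarantees $G_m\in D_2$ for every $m$, which makes the smoothness bound (\ref{3.4'}) legitimate at every step. (For the EGA we take $t=1$ and use, as in the EGA case of Theorem~\ref{T3.1}, that it can only improve on the greedy choice of $\ff_m$.)

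The decisive computation is asymptotic. With $c_m=cm^{-s}$ and $s=(t+1)/(t+q)\in(0,1)$, an integral comparison gives $A_{m-1}=\frac{c}{1-s}m^{1-s}(1+o(1))$, so
$$\theta_m \ge \frac{t(1-s)}{m}(1-\eta_m),\qquad \eta_m\to 0.$$
The key algebraic identity is
$$sq-1=\frac{q(t+1)-(t+q)}{t+q}=\frac{t(q-1)}{t+q}=t(1-s),$$
which places the source term $2\gamma c_m^q=2\gamma c^q m^{-sq}$ precisely at order $m^{-1-t(1-s)}$. Hence for every $\e'>0$ there is an $m_0=m_0(\e')$ such that, for all $m\ge m_0$,
$$a_m\le a_{m-1}\Bigl(1-\frac{t(1-s)-\e'}{m}\Bigr)+\frac{B}{m^{1+t(1-s)}},\qquad B=2\gamma c^q.$$

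From this I would invoke a standard real-sequence lemma (of the type used repeatedly in Chapter~6 of \cite{Tbook}): if $a_m\ge 0$ obeys $a_m\le a_{m-1}(1-A_0/m)+B_0 m^{-1-W_0}$ for $m\ge m_0$ with $A_0,W_0>0$, then $a_m\le Cm^{-\min(A_0,W_0)}$, the constant depending only on $m_0,A_0,W_0,B_0,a_{m_0}$. Given any target $r<t(1-s)$, choose $\e'<t(1-s)-r$ so that $A_0:=t(1-s)-\e'>r$, and apply the lemma with $W_0=t(1-s)>r$; this yields $E(G_m)-E(f)\le C(r,t,q,\gamma,M)m^{-r}$. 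Passing to the infimum over admissible $f$ (with the usual $\e$-minimizer argument, absorbing $\e$ into the constant as in Theorem~\ref{T3.1}) completes the proof. The main obstacle is the coincidence $A_0=W_0=t(1-s)$ at the optimal scale: the two exponents governing the recursion cannot be simultaneously pushed above $t(1-s)$, which is exactly why one must accept an arbitrarily small loss and state the theorem for $r$ strictly less than $t(1-s)$.
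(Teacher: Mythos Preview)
Your proposal is correct and follows essentially the same route as the paper: establish $G_m\in D_2$ via the induction from Theorem~\ref{T3.1}, feed Lemma~\ref{L3.1} into (\ref{3.6}) to get the recursion $a_m\le a_{m-1}(1-\theta_m)+2\gamma c_m^q$ with $\theta_m\sim t(1-s)/m$, exploit the identity $sq-1=t(1-s)$ to balance the contraction against the source, and close with a discrete sequence lemma.

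The only noteworthy difference is in the packaging of that last step. The paper states and invokes a \emph{threshold-type} lemma (Lemma~\ref{L4.1}): one checks the unconditional growth bound $a_n\le a_{n-1}+A(n-1)^{-\alpha}$ and then verifies that whenever $a_\nu\ge A\nu^{-\alpha}$ the contraction $a_{\nu+1}\le a_\nu(1-\beta/\nu)$ kicks in, with $\alpha=r$ and $\beta=(r+t(1-s))/2$. You instead invoke an unconditional Gronwall-type lemma for $a_m\le a_{m-1}(1-A_0/m)+B_0 m^{-1-W_0}$ with $A_0=t(1-s)-\e'$ and $W_0=t(1-s)$. Both formulations deliver the same rate; the paper's version has the mild advantage that it does not require $a_m\ge 0$ (your $a_m=E(G_m)-E(f)$ can in principle dip below zero, though this is harmless after replacing $a_m$ by $\max(a_m,0)$), while your version makes the coincidence $sq-1=t(1-s)$ and the reason for the strict inequality $r<t(1-s)$ more transparent.
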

\begin{proof} In the same way as in the proof of Theorem \ref{T3.1} we prove that $G_m\in D_2$ for all $m$. Then we use inequality (\ref{3.7}) proved in Section 3. Let  $f^\e$, $\e\ge0$, $M>0$, be such that
$$
E(f^\e)-b \le\e,\quad f^\e/M\in A_1(\D),\quad b:=\inf_{\frac{x}{M}\in A_1(\D)}E(x).
$$
Using the assumption $f^\e/M\in A_1(\D)$, we write (\ref{3.7}) with $A(\e)=M$
\begin{equation}\label{4.1}
E(G_{m-1}+c_m\ff_m)\le E(G_{m-1})-\frac{tc_m(E(G_{m-1})-b-\e)}{M+A_{m-1}}+2\gamma c_m^q. 
\end{equation}
We have
$$
A_{m-1}=c\sum_{k=1}^{m-1}k^{-s} \le c(1+\int_1^mx^{-s}dx)=c(1+(1-s)^{-1}(m^{1-s}-1)))
$$
 and
$$
M+A_{m-1}\le M+ c(1-s)^{-1}m^{1-s}.
$$
Therefore, for $m\ge N$ we have with $v:=(r+t(1-s))/2$
\begin{equation}\label{4.2}
\frac{tc_m}{M+A_{m-1}}\ge \frac{v+t(1-s)}{2m}.  
\end{equation}
We need the following   technical lemma. This lemma is a more general version of Lemma 2.1 from \cite{T1} (see also Remark 5.1 in \cite{T7} and Lemma 2.37 on p. 106 of \cite{Tbook}).
\begin{Lemma}\label{L4.1}  Let four positive numbers $\a < \beta \le 1$, $A$, $U\in \mathbb N$ be given and let a sequence $\{a_n\}_{n=1}^\infty$ have the following properties:  $ a_1<A$ and  we have for all $n\ge 2$
 \begin{equation}\label{4.3}
 a_n\le a_{n-1}+A(n-1)^{-\a};  
\end{equation}
 if for some $\nu \ge U$ we have
$$
a_\nu \ge A\nu^{-\a}
$$
then
\begin{equation}\label{4.4}
a_{\nu + 1} \le a_\nu (1- \beta/\nu). 
\end{equation}
Then there exists a constant $C=C(\a , \beta,A,U )$ such that for all $n=1,2,\dots $ we have
$$
a_n \le C n^{-\a} .
$$
 \end{Lemma}

We apply this lemma with $a_n:= E(G_n)-b-\e$, $\alpha:=r$, $\beta:=v:=(r+t(1-s))/2$, $U=N$ and $A$ specified later. Let us check the conditions (\ref{4.3}) and (\ref{4.4}) of Lemma \ref{L4.1}. By the inequality
$$
E(G_m)\le E(G_{m-1})+2\rho(E,c_m) \le E(G_{m-1})+2\gamma c^q m^{-sq}
$$
the condition (\ref{4.3}) holds for $A\ge 2\gamma c^q$. Assume that $a_m\ge Am^{-r}$. Then using $sq\ge 1+r$ we get
\begin{equation}\label{4.5}
c_m^q = c^q m^{-sq}\le c^q m^{-1-r}. 
\end{equation}
Setting $A$ to be big enough to satisfy
$$
2\gamma c_m^q \le \frac{A(t(1-s)-\beta)}{2m^{1+r}}
$$
 we obtain from (\ref{4.1}), (\ref{4.2}), and (\ref{4.5})
$$
a_{m+1}\le a_m(1-\beta/m)
$$
provided $a_m\ge Am^{-r}$. Thus (\ref{4.4}) holds. Applying Lemma \ref{L4.1} we get
$$
a_m\le C(r,t,q,\gamma,M)m^{-r}. 
$$
\end{proof}

We note that in the special case when $\D$ is the unit sphere $\mathcal S$ of $X$ the rate of convergence in Theorem \ref{T4.1} can be improved.

\begin{Theorem}\label{T4.2} Let $E$ be a uniformly smooth convex function with modulus of smoothness $\rho(E,u)\le \gamma u^q$, $q\in(1,2]$ on $D_2$ which we assume to be bounded. 
For a $\delta\in(0,1)$ we set $s:=1-\delta$  and $\C_s:=\{ck^{-s}\}_{k=1}^\infty$ with $c:=c(\delta)$ chosen in such a way that $\gamma c^q \sum_{k=1}^\infty k^{-sq} \le 1$. Suppose $\D=\mathcal S$. Then the 
GGA($t,\C_s$) and EGA($\C_s$) (for this algorithm $t=1$) converge 
  with the following rate:  
$$
E(G_m)-\inf_{x\in D}E(x) \le C(E,\delta,q,\gamma,t)m^{-s(q-1)}.
$$
\end{Theorem}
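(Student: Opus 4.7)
The plan is to imitate the proof of Theorem \ref{T4.1} with one change specifically unlocked by the hypothesis $\D=\mathcal S$: the growing denominator $M+A_{m-1}$ that appears in the key inequality (\ref{3.7}) is replaced by a constant depending only on $E$.

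First I would establish, by the induction used in Theorem \ref{T3.1}, that $G_m\in D_2$ for every $m$ (the choice of $c$ guarantees $2\sum_k\gamma c_k^q\le 2$, so $E(G_m)\le E(0)+2$). Since $D_2$ is assumed bounded, $\|G_m\|\le R_1$ with $R_1$ depending only on $E$. For $\e>0$ choose $f^\e\in D$ with $E(f^\e)\le b+\e$, where $b:=\inf_{x\in D}E(x)$; boundedness of $D\subset D_2$ gives $\|f^\e\|\le R_2$. Set $R:=R_1+R_2$.

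The crucial new observation is that when $\D=\mathcal S$ one has $E_\D(x)=\|E'(x)\|_{X^*}$, so by the convexity inequality (\ref{1.3}),
$$
E(G_{m-1})-E(f^\e)\le \<-E'(G_{m-1}),f^\e-G_{m-1}\>\le \|E'(G_{m-1})\|_{X^*}\,R = R\,E_\D(G_{m-1}).
$$
This replaces Lemma \ref{L3.1} (whose denominator $M+A_{m-1}$ grows with $m$) by the $m$-independent bound $E_\D(G_{m-1})\ge (E(G_{m-1})-b-\e)/R$. Combining with the smoothness estimate (\ref{3.6}) gives, with $a_m:=E(G_m)-b-\e$,
$$
a_m\le a_{m-1}\Bigl(1-\frac{tc_m}{R}\Bigr)+2\gamma c_m^q.
$$

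Substituting $c_m=cm^{-s}$ with $s=1-\delta$ produces $a_m\le a_{m-1}(1-K_1 m^{-s})+K_2 m^{-sq}$. I would prove $a_m\le Cm^{-s(q-1)}$ by direct induction with $C$ large: the identity $s+s(q-1)=sq$ matches the drift and the inhomogeneous term at order $m^{-sq}$, while the telescoping correction $C((m-1)^{-s(q-1)}-m^{-s(q-1)})$ is $O(m^{-s(q-1)-1})=o(m^{-sq})$ precisely because $s<1$. Choosing $C$ proportional to $K_2/K_1$ closes the inductive step for all $m$ past some $m_0$, and enlarging $C$ to cover $m\le m_0$ finishes the induction. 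Sending $\e\to 0$ yields the stated rate. The main obstacle is the recursion itself, which is not of the form $a_{\nu+1}\le a_\nu(1-\beta/\nu)$ required by Lemma \ref{L4.1} — the drift is $\Theta(m^{-s})$ rather than $\Theta(1/m)$ — so Lemma \ref{L4.1} is not directly applicable and a tailored induction is needed; once the dictionary-specific identity $E_\D(x)=\|E'(x)\|_{X^*}$ has been exploited, the remaining analysis is bookkeeping on top of the argument of Theorem \ref{T4.1}.
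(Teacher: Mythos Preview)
Your argument is correct and follows the paper almost line for line: both exploit the identity $E_\D(x)=\|E'(x)\|_{X^*}$ to replace the growing denominator $M+A_{m-1}$ by a fixed diameter bound, arriving at exactly the recursion $a_m\le a_{m-1}(1-tc_m/C_1)+2\gamma c_m^q$ (the paper's~(\ref{4.9}), your displayed inequality with $R$ in place of $C_1$).

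The one point where you diverge is the claim that Lemma~\ref{L4.1} is not applicable because the drift is $\Theta(m^{-s})$ rather than $\Theta(1/m)$. This is backwards: since $s=1-\delta<1$, one has $m^{-s}\gg 1/m$, so condition~(\ref{4.4}) is \emph{easier} to satisfy, not harder. The paper proceeds exactly via Lemma~\ref{L4.1}: when $a_{m-1}\ge Am^{-s(q-1)}$ with $A$ large, the inhomogeneous term $2\gamma c_m^q$ is absorbed by half the drift (both are of order $m^{-sq}$), giving $a_m\le a_{m-1}(1-tc_m/(2C_1))$; then $tc_m/(2C_1)=\Theta(m^{-s})\ge \beta/(m-1)$ for any fixed $\beta\in(s(q-1),1]$ once $m$ is large, and Lemma~\ref{L4.1} applies with $\alpha=s(q-1)$. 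Your tailored induction is a perfectly valid alternative and works for the reason you give (the telescoping correction is $O(m^{-s(q-1)-1})=o(m^{-sq})$ because $s<1$), but the route you dismissed is precisely the one the paper takes.
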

\begin{proof} As we already mentioned in the Introduction in the case $\D=\mathcal S$ we have
$$
E_\D(x)= \|E'(x)\|_{X^*}.
$$
By (\ref{3.4'}) under assumption that $G_{m-1}\in D_2$ we get
$$
E(G_{m-1}+c_m\ff_m)\le E(G_{m-1}) +c_m\<E'(G_{m-1}),\ff_m\>+2\gamma(c_m)^q.
$$
Using definition of $\ff_m$ we continue
\begin{equation}\label{3.6E}
\le E(G_{m-1})-c_mt\|E'(G_{m-1})\|_{X^*}+2\gamma(c_m)^q.
\end{equation}
As in the proof of Theorem \ref{T3.1} we derive from here that $G_m\in D_2$ for all $m$. Using notation  $a_m:=E(G_m)-\inf_{x\in D}E(x)$ we obtain 
\begin{equation}\label{5.24E}
a_{m-1}  = \sup_{f\in D}( E(G_{m-1})-E(f)) \le \|E'(G_{m-1})\|_{X^*}\sup_{f\in D}\|G_{m-1}-f\|.
\end{equation}
Inequality (\ref{5.24E}) and our assumption that $D_2$ is bounded imply
$$
\|E'(G_{m-1})\|_{X^*} \ge a_{m-1}/C_1.
$$
Substituting this bound into (\ref{3.6E}) we get
\begin{equation}\label{4.9}
a_m\le a_{m-1}\left(1-tc_m C_1^{-1}\right)+2\gamma(c_m)^q.
\end{equation}
As in the proof of Theorem \ref{T4.1} we use Lemma \ref{L4.1}. It is clear that 
for $a_{m-1}$ satisfying 
$$
a_{m-1}\ge Am^{-s(q-1)}
$$
with large enough $A$ we have
$$
a_{m-1}tc_m C_1^{-1}\ge 4\gamma(c_m)^q.
$$
Therefore, (\ref{4.9}) gives in this case
\begin{equation}\label{4.10}
a_m\le a_{m-1}\left(1-\frac{tc_m}{2 C_1}\right) .
\end{equation}
It follows from the definition of $c_m$ that 
$$
\frac{tc_m}{2 C_1} \ge \frac{s(q-1)+1}{m-1}\quad\text{for}\quad m\ge U.
$$
Thus by Lemma \ref{L4.1} we obtain
$$
a_m\le C(E,\delta,q,\gamma,t)m^{-s(q-1)}
$$
which proves Theorem \ref{T4.2}.
\end{proof}

\section{Convergence and rate of convergence of the GGA($\tau,b,\mu$)}

We begin with a convergence result.
\begin{Theorem}\label{T5.1} Let $E$ be a uniformly smooth convex function with the modulus of smoothness $\rho(E,D,u)$ and let $\mu(u)$ be a continuous majorant of $\rho(E,D,u)$ with the property $\mu(u)/u \downarrow 0$ as $u\to +0$. Assume that for $x\in D$
$$
\|E'(x)\|_{X^*} \le C_D.
$$
Then, for any $t\in (0,1]$ and $b\in (0,1)$ we have for the GGA($t,b,\mu$)
\begin{equation}\label{5.1}
\lim_{m\to\infty}E(G_{m})=\inf_{x\in D}E(x).
\end{equation}
\end{Theorem}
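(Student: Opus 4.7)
My plan is to run the weak-$*$ compactness argument from the proof of Lemma \ref{L2.1}, supplemented by a descent estimate tailored to the step-size rule (\ref{7.4}) and a small case analysis to absorb the alternative $c_m=1$.

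Applying Lemma \ref{L1.1} with $x=G_{m-1}$, $y=\ff_m$, $u=c_m$ together with the greedy bound (\ref{7.3}) gives
$$
E(G_m)\le E(G_{m-1})-tc_mE_\D(G_{m-1})+2\mu(c_m).
$$
When (\ref{7.4}) has a solution, $2\mu(c_m)=tbc_mE_\D(G_{m-1})$ and hence
$$
E(G_m)\le E(G_{m-1})-t(1-b)c_mE_\D(G_{m-1}).
$$
In the alternative case $c_m=1$, the defining condition $\mu(u)/u<(tb/2)E_\D(G_{m-1})$ for all $u>0$ evaluated at $u=1$ gives $2\mu(1)<tbc_mE_\D(G_{m-1})$, and the same descent holds. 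An induction then shows $G_m\in D$ for every $m$, $\{E(G_m)\}$ decreases to some limit $L$, and
$$
t(1-b)\sum_{m}c_mE_\D(G_{m-1})\le E(0)-\inf_{D}E<\infty.
$$

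Assume for contradiction $L>\inf_D E$ and pick $f\in D$ with $E(G_m)-E(f)\ge\alpha>0$ for all large $m$. The central task is to produce a subsequence $\{m_i\}$ along which both $A_{m_i}E_\D(G_{m_i})\to 0$ and $-E'(G_{m_i})\to F$ in the $w^*$-topology. The case $c_m=1$ occurs only finitely often (else the descent series diverges); for large $m$ the identity $(2/tb)\mu(c_m)/c_m=E_\D(G_{m-1})$ together with $\sum_m\mu(c_m)<\infty$ forces $c_m\to 0$ and then, by the monotone vanishing of $\mu(u)/u$ at $0$, $E_\D(G_m)\to 0$ on the full sequence. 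If $\{A_m\}$ is bounded, $A_mE_\D(G_m)\to 0$ follows immediately. If $A_m\to\infty$, the classical divergence $\sum_n c_n/A_n=\infty$ cited in Lemma \ref{L2.1} combined with the shifted summability $\sum_m c_{m+1}E_\D(G_m)<\infty$ yields $\liminf_m A_mE_\D(G_m)=0$, since otherwise $E_\D(G_m)\ge\varepsilon/A_m$ eventually would give $\sum_m c_{m+1}E_\D(G_m)=\infty$. Along such a subsequence, $\|E'(x)\|_{X^*}\le C_D$ on $D$ and Banach--Alaoglu extract a $w^*$-convergent sub-subsequence $-E'(G_{m_i})\to F$.

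The contradiction is then standard: split $\<-E'(G_{m_i}),f\>=\<-E'(G_{m_i}),f-G_{m_i}\>+\<-E'(G_{m_i}),G_{m_i}\>$; the first summand is $\ge E(G_{m_i})-E(f)\ge\alpha$ by (\ref{1.3}), and the second is bounded by $A_{m_i}E_\D(G_{m_i})\to 0$, so $\<F,f\>\ge\alpha>0$ and $F\ne 0$. On the other hand, $\<-E'(G_{m_i}),g\>\le E_\D(G_{m_i})\to 0$ for every $g\in\D$, giving $\<F,g\>\le 0$; symmetry of $\D$ upgrades this to equality, and density of $\sp\D$ forces $F=0$, a contradiction. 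The main obstacle is the middle paragraph: organizing the subsequence so that $A_{m_i}E_\D(G_{m_i})\to 0$ while absorbing both the $c_m=1$ branch of the algorithm and the slight index shift between the descent-derived summability and the form of summability assumed in Lemma \ref{L2.1}.
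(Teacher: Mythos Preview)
Your proposal is correct and follows the same overall strategy as the paper: establish the descent inequality $E(G_m)\le E(G_{m-1})-t(1-b)c_mE_\D(G_{m-1})$, deduce $\sum_m c_mE_\D(G_{m-1})<\infty$, and then combine Abel--Dini with the weak-$^*$ compactness argument of Lemma~\ref{L2.1} to reach a contradiction.

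There are two organizational differences worth pointing out. First, the paper splits into the cases $\sum c_m=\infty$ and $\sum c_m<\infty$; in the second case it avoids weak-$^*$ compactness altogether, arguing directly that $E_\D(G_m)\to 0$, hence $\<E'(G_m),G_m\>\to 0$ (via $A_m$ bounded) and $\<E'(G_m),f\>\to 0$ for every $f$ (via density of $\sp\D$ and the bound $\|E'\|_{X^*}\le C_D$), which immediately contradicts (\ref{1.3}). Your version instead observes up front that $\sum_m\mu(c_m)<\infty$ (from $\mu(c_m)=\tfrac{tb}{2}c_mE_\D(G_{m-1})$), forcing $E_\D(G_m)\to 0$ on the \emph{full} sequence in both cases, and then you run the Lemma~\ref{L2.1} compactness argument uniformly. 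This is a slightly cleaner packaging; note, though, that the intermediate claim ``$c_m\to 0$'' can fail if $\mu$ vanishes on an interval $[0,u_0]$, while the conclusion $E_\D(G_{m-1})=\tfrac{2}{tb}\mu(c_m)/c_m\to 0$ still goes through (since $\mu(c_m)\to 0$ with $c_m$ bounded already gives $\mu(c_m)/c_m\to 0$). Second, you explicitly dispose of the $c_m=1$ branch of the algorithm, showing it can occur only finitely often; the paper's proof tacitly assumes (\ref{7.4}) always has a solution and does not address this branch.
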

\begin{proof} In this case $\tau =\{t\}$, $t\in (0,1]$. We have by (\ref{3.4'}) under assumption that $G_{m-1}\in D$ 
$$
E(G_{m-1}+c_m\ff_m)\le E(G_{m-1}) +c_m\<E'(G_{m-1}),\ff_m\>+2\mu(c_m).
$$
Using definition of $\ff_m$ we continue
\begin{equation}\label{7.24}
\le E(G_{m-1})-c_mtE_\D(G_{m-1})+2\mu(c_m).
\end{equation}
Using the choice of $c_m$ we find
\begin{equation}\label{7.25}
E(G_m) \le E(G_{m-1})-t(1-b)c_mE_\D(G_{m-1}).  
\end{equation}
In particular, (\ref{7.25}) implies that $\{E(G_m)\}$ is a monotone decreasing sequence and therefore our assumption that $G_{m-1}\in D$ implies that $G_{m}\in D$. Clearly, $G_0\in D$. Thus we obtain that $G_{m}\in D$ for all $m$. Also, (\ref{7.25}) implies that
$$
t(1-b)c_mE_\D(G_{m-1}) \le E(G_{m-1}) -E(G_m).
$$
Thus
\begin{equation}\label{7.26}
\sum_{m=1}^\infty c_mE_\D(G_{m-1}) <\infty.  
\end{equation}
We have the following two cases:
$$
(I)\quad \sum_{m=1}^\infty c_m =\infty, \qquad (II)\quad \sum_{m=1}^\infty c_m <\infty.
$$
First, we consider  case (I). Our argument here is as in Lemma \ref{2.1}. 
  Denote $s_n:=\sum_{j=1}^nc_j$. Then our assumption implies (see \cite{Bar}, p. 904) 
that
\begin{equation}\label{2.5g}
\sum_{n=1}^\infty \frac{c_n}{s_n} =\infty.  
\end{equation}
Using (\ref{7.26})) we get
$$
\sum_{n=1}^\infty s_nE_\D(G_{n-1})\frac{c_n}{s_n} = \sum_{n=1}^\infty c_nE_\D(G_{n-1}) <\infty.
$$
Thus, by (\ref{2.5g})
$$
\liminf_{n\to \infty} s_nE_\D(G_{n-1}) =0.
$$
Clearly, the above relation implies 
$$
\liminf_{n\to \infty} s_nE_\D(G_{n}) =0.
$$
The rest of the proof in this case repeats the corresponding part from the proof of Lemma \ref{L2.1}. As a result we obtain
$$
\liminf_{m\to 0}E(G_m)=\inf_{x\in D}E(x).
$$
Monotonicity of $\{E(G_m)\}$ implies that we can replace liminf by lim in the above relation.
 
Second, we consider the case (II). Our assumption implies that $c_m\to 0$ as $m\to \infty$. From the definition (\ref{7.4}) of $c_m$ we obtain
\begin{equation}\label{5.5}
   E_\D(G_{m-1}) = \frac{2}{tb} \mu(c_m)/c_m \to 0, \quad m\to \infty.
\end{equation}
We show that relation (\ref{5.5}) implies the following two properties (\ref{5.6}) and (\ref{5.7})
\begin{equation}\label{5.6}
\lim_{m\to 0}\<E'(G_m),G_m\> = 0,
\end{equation}
\begin{equation}\label{5.7}
\lim_{m\to 0}\<E'(G_m),f\> = 0. 
\end{equation}
Indeed, for (\ref{5.6}) we have
$$
|\<E'(G_m),G_m\>| = |\sum_{j=1}^m \<E'(G_m),\ff_j\>c_j | \le E_\D(G_m)\sum_{j=1}^mc_j \to 0.
$$
We now prove (\ref{5.7}). For arbitrary $\e>0$ find $f^\e$ such that 
$$
\|f-f^\e\|\le \e,\quad f^\e/A(\e)\in A_1(\D),
$$
with some $A(\e)$. 
Then
$$
|\<E'(G_m),f\>|=|\<E'(G_m),f^\e\>+\<E'(G_m),f-f^\e\>|\le E_\D(G_m)A(\e)+C_D\e.
$$
We complete the proof of case (II) by contradiction. We assume that (\ref{5.1}) does not hold, that is,  there exist $\a>0$ and $ N\in{\mathbb N}$ such that
\begin{equation}\label{2.7g}
E(G_m)-\inf_{x\in D}E(x) \ge 2\a, \quad m \ge N,  
\end{equation}
and then derive a contradiction. 
By (\ref{2.7g}) there exists $f\in D$ such that 
\begin{equation}\label{2.8g}
E(G_m)-E(f) \ge \a, \quad m \ge N.  
\end{equation}
By (\ref{1.3}) we obtain
\begin{equation}\label{2.9g}
\<-E'(G_m),f-G_m\>\ge E(G_m)-E(f)\ge \a.
\end{equation}
This contradicts to (\ref{5.6}) and (\ref{5.7}).

\end{proof}

\begin{Theorem}\label{T5.1E} Let $E$ be a uniformly smooth convex function.  Assume that for $x\in D$
$$
\|E'(x)\|_{X^*} \le C_D.
$$
Then, for any $t\in (0,1]$   we have for the GEGA($\{t\}$)
\begin{equation}\label{5.1E}
\lim_{m\to\infty}E(G_{m})=\inf_{x\in D}E(x).
\end{equation}
\end{Theorem}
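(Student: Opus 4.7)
My plan is to reduce Theorem \ref{T5.1E} to the proof pattern of Theorem \ref{T5.1} by exploiting the fact that the line-search coefficient dominates any surrogate one-step GGA update.

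First, monotonicity is immediate: $c=0$ is admissible in (\ref{1.15}), so $E(G_m)\le E(G_{m-1})$, hence $G_m\in D$ and $E(G_m)\searrow E_\infty\ge\inf_D E$. Because $E$ is uniformly smooth on $D$, pick a continuous majorant $\mu$ of $\rho(E,D,u)$ with $\mu(u)/u\downarrow 0$; fix $b\in(0,1)$; and, for each $m$, let $c_m^\ast$ be the positive solution of $\mu(c_m^\ast)/c_m^\ast=(tb/2)E_\D(G_{m-1})$ (or $c_m^\ast=1$ otherwise), exactly as in the GGA$(t,b,\mu)$. By Lemma \ref{L1.1},
$$
E(G_{m-1}+c_m^\ast\ff_m)\le E(G_{m-1})-t(1-b)c_m^\ast E_\D(G_{m-1}),
$$
and line-search optimality of $c_m$ lifts this to $E(G_m)\le E(G_{m-1}+c_m^\ast\ff_m)$, which is exactly the descent (\ref{7.25}). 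Telescoping then yields $\sum_m c_m^\ast E_\D(G_{m-1})<\infty$, and the same bootstrap as in the proof of Theorem \ref{T5.1} (the defining relation for $c_m^\ast$ together with $\mu(u)/u\downarrow 0$) forces $E_\D(G_{m-1})\to 0$.

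To conclude $E_\infty=\inf_D E$ I argue by contradiction, following the case (II) pattern of the proof of Theorem \ref{T5.1}. Assuming $E_\infty>\inf_D E$ and fixing $f\in D$ with $E(G_m)-E(f)\ge\a>0$ eventually, (\ref{1.3}) yields
$$
\<-E'(G_m),f\>+\<E'(G_m),G_m\>\ge\a.
$$
The first term tends to zero by the standard approximation $f=f^\e+(f-f^\e)$ with $f^\e/A(\e)\in A_1(\D)$, $\|f-f^\e\|\le\e$, combined with $E_\D(G_m)\to 0$ and $\|E'(G_m)\|_{X^\ast}\le C_D$. The main obstacle is the second term, since the GGA argument controlled $\<E'(G_m),G_m\>=\sum_{j=1}^m c_j\<E'(G_m),\ff_j\>$ via $\sum c_j<\infty$, which is not available for the actual GEGA coefficients. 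The key new ingredient I intend to use is the line-search first-order condition $\<E'(G_j),\ff_j\>=0$: applying Lemma \ref{L1.1} at $G_j$ with $u=-c_j$, $y=\ff_j$ yields $E(G_{j-1})-E(G_j)\le 2\mu(c_j)$, and at $G_{j-1}$ with $u=c_j$, $y=\ff_j$ yields $c_j tE_\D(G_{j-1})\le E(G_{j-1})-E(G_j)+2\mu(c_j)\le 4\mu(c_j)$. Together with the bound of Step 2, this should promote $\sum_m c_m^\ast E_\D(G_{m-1})<\infty$ to $\sum_j c_j E_\D(G_{j-1})<\infty$ for the actual coefficients; once this is in hand, the weak$^\ast$-compactness argument of Lemma \ref{L2.1}, applied to $-E'(G_{n_k})\to F$ weak$^\ast$ along a subsequence on which $s_{n_k}E_\D(G_{n_k})\to 0$, forces $F=0$ by density of $\sp\D$ in $X$, contradicting $\<F,f\>\ge\a>0$.
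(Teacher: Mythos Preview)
Your overall strategy---introduce a surrogate coefficient $c_m^\ast$ satisfying the GGA$(t,b,\mu)$ defining relation, derive the descent inequality $E(G_m)\le E(G_{m-1})-t(1-b)c_m^\ast E_\D(G_{m-1})$ via line-search optimality, and then invoke the machinery of Theorem~\ref{T5.1}---is exactly the paper's approach. The paper's proof is in fact terser than yours: it simply asserts that Theorem~\ref{T5.1}'s argument ``used only assumptions on $E$ and analogs of relations (\ref{7.25E}) and (\ref{7.4E})'' and so carries over verbatim, without isolating the term $\<E'(G_m),G_m\>$ at all. Your observation that $E_\D(G_{m-1})\to 0$ follows already from $\sum_m c_m^\ast E_\D(G_{m-1})<\infty$ and the defining relation for $c_m^\ast$ is correct (if $E_\D(G_{m_k-1})\ge\e$ along a subsequence then $\mu(c_{m_k}^\ast)/c_{m_k}^\ast\ge tb\e/2$, forcing $c_{m_k}^\ast\ge\de>0$ and contradicting summability). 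You are also right that the real obstacle is $\<E'(G_m),G_m\>$: since $G_m=\sum_{j\le m}c_j\ff_j$ involves the \emph{actual} GEGA coefficients $c_j$ rather than the surrogates $c_j^\ast$, the bound $|\<E'(G_m),G_m\>|\le s_m E_\D(G_m)$ is not controlled by the surrogate telescoping alone. This is a genuine subtlety that the paper's one-line reduction suppresses.

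The gap is in your proposed resolution. From the first-order optimality condition $\<E'(G_j),\ff_j\>=0$ you correctly derive $E(G_{j-1})-E(G_j)\le 2\mu(c_j)$ and $c_j\,tE_\D(G_{j-1})\le 4\mu(c_j)$. But the claimed ``promotion'' of $\sum_m c_m^\ast E_\D(G_{m-1})<\infty$ to $\sum_j c_j E_\D(G_{j-1})<\infty$ does not follow from these: summing the second inequality would require $\sum_j\mu(c_j)<\infty$, and nothing in your argument bounds $\sum_j\mu(c_j)$---indeed the first inequality goes the wrong way (it lower-bounds $\mu(c_j)$ by the telescoping increment), and there is no a priori upper bound on the line-search step $c_j$, so $\mu(c_j)$ need not even tend to zero. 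Consequently the Abel--Dini\,/\,weak$^\ast$ step you sketch has no input, and the argument as written does not close. What your inequality $\mu(c_j)/c_j\ge (t/4)E_\D(G_{j-1})$ actually yields, via monotonicity of $\mu(u)/u$ and the surrogate relation $\mu(c_j^\ast)/c_j^\ast=(tb/2)E_\D(G_{j-1})$, is a comparison between $c_j$ and $c_j^\ast$; you should try to exploit that comparison directly rather than aiming for $\sum_j c_j E_\D(G_{j-1})<\infty$.
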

\begin{proof} Let $E$ be a uniformly smooth convex function with the modulus of smoothness $\rho(E,D,u)$ and let $\mu(u)$ be a continuous majorant of $\rho(E,D,u)$ with the property $\mu(u)/u \downarrow 0$ as $u\to +0$. As in (\ref{7.25}) we obtain
\begin{equation}\label{7.25E}
E(G_m) \le E(G_{m-1}+c_m'\ff_m) \le E(G_{m-1})-t(1-b)c_m'E_\D(G_{m-1}) 
\end{equation}
with $c_m'$ chosen from the equation
\begin{equation}\label{7.4E}
 \mu(c_m') = \frac{tb}{2}c_m'E_\D(G_{m-1})  
\end{equation}
with some fixed $b\in(0,1)$. 

The proof of Theorem \ref{T5.1} used only assumptions on $E$ and analogs of relations  (\ref{7.25E}) and (\ref{7.4E}). Therefore the same proof gives 
(\ref{5.1E}).
\end{proof}

We proceed to study  the rate of convergence of the GGA$(\tau,b,\mu)$ for the uniformly smooth convex function with the power-type majorant of modulus of smoothness: $\rho(E,D,u)\le \mu(u)= \gamma u^q$, $1<q\le 2$.  
\begin{Theorem}\label{T5.2} Let $\tau :=\{t_k\}_{k=1}^\infty$ be a nonincreasing sequence $1\ge t_1\ge t_2 \dots >0$ and $b\in (0,1)$. Assume that uniformly smooth convex function $E$ has a modulus of smoothness $\rho(E,D,u)\le \gamma u^q$, $q\in (1,2]$. Denote $\mu(u) = \gamma u^q$. Then  the rate of convergence of the GGA$(\tau,b,\mu)$ is given by 
$$
E(G_m)-\inf_{x\in A_1(\D)}E(x)\le C(b,\gamma,q)(1+\sum_{k=1}^mt_k^p)^{-\frac{t_m(1-b)(q-1)}{q+t_m(1-b)}}, \quad p:= \frac{q}{q-1}.
$$
\end{Theorem}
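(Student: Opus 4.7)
My plan is to follow the same template as the proof of Theorem 4.1: derive a one-step decrement for $a_m := E(G_m) - b$ (where $b := \inf_{x \in A_1(\D)} E(x)$), couple it via Lemma 3.1 to a recurrence in $(a_m, A_m)$ with $A_m := \sum_{k\le m} c_k$, and close the recurrence via a Lemma 4.1-type argument.

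First, since $\rho(E,D,u)\le\gamma u^q=\mu(u)$, Lemma 1.1 gives the analog of (3.4'), which in combination with (7.4) --- exactly as in the derivation of (7.25) in the proof of Theorem 5.1 --- yields
$$
a_m \le a_{m-1} - t_m(1-b)\, c_m\, E_\D(G_{m-1}).
$$
Solving (7.4) for $c_m = (t_m b E_\D(G_{m-1})/(2\gamma))^{1/(q-1)}$ and substituting produces the clean form
$$
a_m \le a_{m-1} - K_0\, t_m^p\, E_\D(G_{m-1})^p,\qquad K_0 := (1-b)\bigl(b/(2\gamma)\bigr)^{p-1}.
$$
Monotonicity of $\{E(G_k)\}$ (as in (7.25)) keeps every $G_k$ in $D$, so the smoothness bound stays in force throughout. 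Applying Lemma 3.1 with $A=1$ to a near-minimizer $f^\e \in A_1(\D)$ gives $E_\D(G_{m-1}) \ge a_{m-1}/(1+A_{m-1})$, so that
$$
a_m \le a_{m-1} - K_0\, t_m^p\, \frac{a_{m-1}^p}{(1+A_{m-1})^p}. \qquad (\ast)
$$

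The auxiliary bound I would pair with $(\ast)$ is $c_k \le K_1 (a_{k-1}-a_k)^{1/q}$ with $K_1$ depending only on $b,\gamma,q$; this follows from (7.4) (equivalently $c_k E_\D(G_{k-1}) = (2\gamma/(t_k b))\,c_k^q$) combined with the telescoping $c_k E_\D(G_{k-1}) \le (a_{k-1}-a_k)/(t_k(1-b))$, which gives $c_k^q \le K_1^q (a_{k-1}-a_k)$. Fixing a target index $m$ and using the monotonicity $t_k \ge t_m$ for $k \le m$, I would then prove by induction on $k \le m$ that $a_k \le C(b,\gamma,q)\,B_k^{-r_m}$, where $B_k := 1 + \sum_{j\le k} t_j^p$. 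The exponent $r_m = t_m(1-b)(q-1)/(q + t_m(1-b)) = s_m/(p + s_m(p-1))$ (with $s_m := t_m(1-b)$) is the unique one for which the inductive step closes: the algebraic identity $r_m(p + s_m(p-1)) = s_m$ matches the scaling $(1+A_{k-1})^p \lesssim B_{k-1}^{\,1-r_m(p-1)}$, itself a discrete version of the continuous-time heuristic $a(1+A)^{s_m}\asymp\text{const}$ (obtained by dividing the differential inequalities for $a'$ and $A'$), against the required per-step decrement $a_{k-1}-a_k \gtrsim r_m\, t_m^p\, a_{k-1}/B_{k-1}$ supplied by $(\ast)$. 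Once this balance is verified, Lemma 4.1 (or a direct discrete-Gronwall adaptation accommodating the non-uniform weights $t_k^p$) packages the conclusion into the desired power bound.

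The main technical obstacle is exactly this coupling between $a_k$ and $A_k$ in $(\ast)$: neither quantity can be bounded in isolation, and both must be propagated jointly through the induction. The specific form of $r_m$ is not chosen for convenience --- it is forced by the simultaneous scaling of $a_k$ and $1+A_k$ in the dynamics. I expect the delicate step to be establishing the discrete analog $a_k(1+A_k)^{s_m}\lesssim 1$ with constants depending only on $b,\gamma,q$, using only the $c_k$-bound above and the telescoped decrement of the $a_k$'s; once that is done, the rest of the argument is a standard application of the Lemma 4.1 machinery.
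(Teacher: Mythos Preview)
Your outline follows the paper's route closely---same one-step decrement, same use of Lemma~3.1, same coupled pair $(a_k,\,1+A_k)$, and you even identify the correct invariant $a_k(1+A_k)^{t_m(1-b)}\lesssim 1$ as the crux. But you leave precisely that invariant as an open ``delicate step,'' and the closing device you reach for (Lemma~4.1) is not the one that works here.

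The invariant is in fact a one-liner, not delicate. Writing $b_k:=1+A_k$ and $s:=t_m(1-b)$, the decrement $a_k\le a_{k-1}(1-t_k(1-b)c_k/b_{k-1})$ and the elementary bound $(1+x)^\alpha\le 1+\alpha x$ give $b_k^{s}\le b_{k-1}^{s}(1+s\,c_k/b_{k-1})$; multiplying and using $t_k\ge t_m$ yields $a_k b_k^{s}\le a_{k-1}b_{k-1}^{s}$, hence $a_m b_m^{t_m(1-b)}\le a_0$. Your proposed route to this via $c_k\lesssim (a_{k-1}-a_k)^{1/q}$ only controls $\sum c_k^q$, not $\sum c_k$, and does not obviously produce the invariant; and feeding the invariant back into an induction on $a_k\le C B_k^{-r_m}$ as you sketch makes the implied constant in $(1+A_{k-1})^p\lesssim B_{k-1}^{1-r_m(p-1)}$ depend on the very $C$ you are trying to establish.

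The paper sidesteps that circularity by decoupling. After proving the invariant, it introduces the single scalar $u_k:=a_k^{1/(q-1)}/b_k^{\,p}$ and shows from your $(\ast)$ that $u_k\le u_{k-1}\bigl(1-A^{-1}t_k^{\,p}\,u_{k-1}\bigr)$; the standard recursion lemma for $x_k\le x_{k-1}(1-\lambda_k x_{k-1})$ (not Lemma~4.1, which is tailored to fixed $\beta/n$ weights) then gives $u_m\le C\bigl(1+\sum_{k\le m}t_k^{\,p}\bigr)^{-1}$. Eliminating $b_m$ between this and the invariant $a_m b_m^{t_m(1-b)}\le a_0$ forces exactly the exponent $t_m(1-b)(q-1)/(q+t_m(1-b))$. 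So your plan is the right one, but the two pieces you flagged as hardest are handled by (i) the $(1+x)^\alpha$ trick and (ii) the self-referential $u_k$-recursion rather than Lemma~4.1.
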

\begin{proof} As in (\ref{7.25}), we get
\begin{equation}\label{7.30}
E(G_m)\le E(G_{m-1}) -t_m(1-b)c_mE_\D(G_{m-1}).  
\end{equation}
Thus we need to estimate  $c_mE_\D(G_{m-1})$ from below. 
Denote $b_n:= 1+\sum_{j=1}^nc_j$.  By Lemma \ref{3.1}   we obtain
\begin{equation}\label{7.32}
E_\D(G_{m-1})   \ge (E(G_{m-1}) - w)/b_{m-1},\quad w:=\inf_{x\in A_1(\D)}E(x) .  
\end{equation}
Substituting (\ref{7.32}) into (\ref{7.30}) and using notation $a_m:=E(G_m)-w$ we get
\begin{equation}\label{7.33}
a_m \le a_{m-1}(1-t_m(1-b)c_m/b_{m-1}).  
\end{equation}
From the definition of $b_m$ we find
$$
b_m = b_{m-1} +c_m = b_{m-1}(1+c_m/b_{m-1}).
$$
Using the inequality
$$
(1+x)^\a \le 1+\a x, \quad 0\le \a\le 1, \quad x\ge 0,
$$
we obtain
\begin{equation}\label{7.34}
b_m^{t_m(1-b)}  \le  b_{m-1}^{t_m(1-b)}(1+t_m(1-b)c_m/b_{m-1}).  
\end{equation}
Multiplying (\ref{7.33}) and (\ref{7.34}), and using that $t_m\le t_{m-1}$, we get
\begin{equation}\label{7.35}
a_mb_m^{t_m(1-b)}  \le a_{m-1} b_{m-1}^{t_{m-1}(1-b)} \le a_0.  
\end{equation}
The function $\mu(u)/u = \gamma u^{q-1}$ is increasing on $[0,\infty)$. Therefore the $c_m$ is greater than or equal to $c_m'$ from (see (\ref{7.32}))
\begin{equation}\label{7.36}
\ga  (c_m')^q = \frac{t_mb}{2}c_m'a_{m-1}/b_{m-1},  
\end{equation}
\begin{equation}\label{7.37}
c_m' = \left(\frac{t_mb}{2\ga}\right)^{\frac{1}{q-1}}\left(\frac{a_{m-1}}{b_{m-1}}\right)^{\frac{1}{q-1}}.  
\end{equation}
Using notations
$$
p:=\frac{q}{q-1},\qquad A^{-1} := (1-b)\left(\frac{b}{2\ga}\right)^{\frac{1}{q-1}},
$$
we obtain  
\begin{equation}\label{7.38}
a_m\le a_{m-1} \left(1-\frac{t_m^p}{A}\frac{a_{m-1}^{\frac{1}{q-1}}}{b_{m-1}^p}\right),  
\end{equation}
from (\ref{7.33}) and (\ref{7.37}). Noting that $b_m\ge b_{m-1}$, we infer from (\ref{7.38}) that
\begin{equation}\label{7.39}
\frac{a_m^{\frac{1}{q-1}}}{b_m^p} \le \frac{a_{m-1}^{\frac{1}{q-1}}}{b_{m-1}^p}\left(1-\frac{t_m^p}{A}\frac{a_{m-1}^{\frac{1}{q-1}}}{b_{m-1}^p}\right).  
\end{equation}
We obtain from (\ref{7.39}) by an analog of Lemma 2.16 from Chapter 2  of \cite{Tbook} (see \cite{T13}, Lemma 3.1)
\begin{equation}\label{7.40}
\frac{a_m^{\frac{1}{q-1}}}{b_m^p} \le C(E,b,\gamma)\left(1+\sum_{k=1}^m t_k^p\right)^{-1}.  
\end{equation}
Combining (\ref{7.35}) and (\ref{7.40}), we get 
$$
a_m\le C(E,b,\gamma,q)(1+\sum_{k=1}^m t_k^p)^{-\frac{t_m(1-b)(q-1)}{q+t_m(1-b)}}, \quad p:= \frac{q}{q-1}.
$$
This completes the proof of Theorem \ref{T5.2}. 
\end{proof} 

We note that in the special case when $\D$ is the unit sphere $\mathcal S$ of $X$ the rate of convergence in Theorem \ref{T5.2} can be improved.

\begin{Theorem}\label{T5.3} Let $\tau :=\{t_k\}_{k=1}^\infty$ be a weakness sequence $t_k\in[0,1]$ and $b\in (0,1)$. Assume that uniformly smooth convex function $E$ has a modulus of smoothness $\rho(E,D,u)\le \gamma u^q$, $q\in (1,2]$. Denote $\mu(u) = \gamma u^q$. Suppose that $\D=\mathcal S$. Then  the rate of convergence of the GGA$(\tau,b,\mu)$ is given by 
\begin{equation}\label{5.22}
E(G_m)-\inf_{x\in  D}E(x)\le C(E,b,\gamma,q)\left(1+\sum_{k=1}^mt_k^p\right)^{1-q} , \quad p:= \frac{q}{q-1}.
\end{equation}
\end{Theorem}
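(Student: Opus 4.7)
The plan is to mimic the proof of Theorem 5.2 step by step, but replace the key lower bound on $E_\D(G_{m-1})$ coming from Lemma 3.1 (which brings in the denominator $1+A_{m-1}$, and is the source of the weaker exponent in Theorem 5.2) by the sharper bound available when $\D=\mathcal S$, exactly as was done in passing from Theorem 4.1 to Theorem 4.2.

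First, I would reproduce inequality (7.25) of the proof of Theorem 5.2: combining Lemma \ref{L1.1} with the greedy step (\ref{7.3}) and the defining equation (\ref{7.4}) for $c_m$ yields
$$
E(G_m)\le E(G_{m-1})-t_m(1-b)c_mE_\D(G_{m-1}),
$$
so in particular $E(G_m)$ is monotone nonincreasing and $G_m\in D$ for all $m$, making the modulus-of-smoothness bound $\rho(E,D,u)\le\gamma u^q$ applicable along the whole trajectory. Next, using $\D=\mathcal S$ I have $E_\D(x)=\|E'(x)\|_{X^*}$, and the argument for (\ref{5.24E}) in the proof of Theorem \ref{T4.2} (boundedness of $D$, convexity, Hahn--Banach) gives $\|E'(G_{m-1})\|_{X^*}\ge a_{m-1}/C_1$, where $a_m:=E(G_m)-\inf_{x\in D}E(x)$ and $C_1$ depends only on the diameter of $D$.

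Then I would solve (\ref{7.4}) explicitly: $\gamma c_m^q=(t_mb/2)c_mE_\D(G_{m-1})$ gives
$$
c_m=\left(\frac{t_mb}{2\gamma}\right)^{1/(q-1)}E_\D(G_{m-1})^{1/(q-1)},
$$
so that
$$
c_mE_\D(G_{m-1})=\left(\frac{t_mb}{2\gamma}\right)^{1/(q-1)}E_\D(G_{m-1})^{p}\ge c_0 t_m^{p-1} (a_{m-1}/C_1)^{p}
$$
with $c_0=c_0(b,\gamma,q)$ and $p=q/(q-1)$. Plugging this into the displayed recursion and absorbing constants, using $p-1=1/(q-1)$, yields
$$
a_m\le a_{m-1}\bigl(1-c_1 t_m^{p}\,a_{m-1}^{p-1}\bigr)
$$
for some $c_1=c_1(E,b,\gamma,q)>0$.

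This is exactly the recurrence handled by the technical lemma already cited in the proof of Theorem \ref{T5.2} (the analog of Lemma 2.16 in \cite{Tbook}, or Lemma 3.1 of \cite{T13}): it produces
$$
a_m^{p-1}\le C(E,b,\gamma,q)\Bigl(1+\sum_{k=1}^m t_k^{p}\Bigr)^{-1},
$$
and raising to the power $1/(p-1)=q-1$ gives the claimed bound (\ref{5.22}). The only genuinely new content beyond Theorem \ref{T5.2} is the inequality $\|E'(G_{m-1})\|_{X^*}\ge a_{m-1}/C_1$, which is why the hypothesis is upgraded to $\D=\mathcal S$ plus the (already standing) boundedness of $D$; I expect this observation, not the recurrence manipulation, to be the one substantive step, and the rest is bookkeeping paralleling the proofs of Theorems \ref{T4.2} and \ref{T5.2}.
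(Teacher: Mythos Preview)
Your proposal is correct and follows essentially the same route as the paper's proof: solve (\ref{7.4}) explicitly for $c_m$, use $E_\D(G_{m-1})=\|E'(G_{m-1})\|_{X^*}\ge a_{m-1}/C_1$ from boundedness of $D$, substitute into the monotonicity recursion (\ref{7.30}) to obtain $a_m\le a_{m-1}\bigl(1-C_2^{-1}t_m^p a_{m-1}^{1/(q-1)}\bigr)$, and then invoke the same technical lemma that produced (\ref{7.40}). The only cosmetic difference is the order in which you solve for $c_m$ and apply the lower bound on $E_\D$; the substance is identical.
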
 
\begin{proof} As we already mentioned in the Introduction in the case $\D=\mathcal S$ we have
$$
E_\D(x)= \|E'(x)\|_{X^*}.
$$
For $\mu(u) = \gamma u^q$ we obtain for the $c_m$
$$
\gamma c_m^q = \frac{t_mb}{2}c_m\|E'(G_{m-1})\|_{X^*}\quad \Rightarrow\quad c_m =\left(\frac{t_mb}{2\gamma}\|E'(G_{m-1})\|_{X^*}\right)^{\frac{1}{q-1}}.
$$
Therefore, by (\ref{7.30}) we get
\begin{equation}\label{5.23}
E(G_m)\le E(G_{m-1}) -t_m(1-b)\left(\frac{t_mb}{2\gamma}\|E'(G_{m-1})\|_{X^*}\right)^{\frac{1}{q-1}}\|E'(G_{m-1})\|_{X^*}.
\end{equation}
Equation (\ref{5.23}) implies that $G_m\in D$ for all $m$.
Using the notation $a_m:=E(G_m)-w$, $w:=\inf_{x\in D}E(x)$ we obtain 
\begin{equation}\label{5.24}
a_{m-1} =  \sup_{f\in D}( E(G_{m-1})-E(f)) \le \|E'(G_{m-1})\|_{X^*}\sup_{f\in D}\|G_{m-1}-f\|.
\end{equation}
Inequality (\ref{5.24}) and our assumption that $D$ is bounded imply
$$
\|E'(G_{m-1})\|_{X^*} \ge a_{m-1}/C_1.
$$
Substituting this bound into (\ref{5.23}) we get
\begin{equation}\label{5.25}
a_m\le a_{m-1}\left(1-t_m^pa_{m-1}^{\frac{1}{q-1}}C_2^{-1}\right).
\end{equation}
Inequality (\ref{5.25}) is similar to (\ref{7.38}). We derive (\ref{5.22}) from (\ref{5.25}) in the same way as (\ref{7.40}) was derived from (\ref{7.38}).
\end{proof}

{\bf Acknowledgements.} This paper was motivated by the IMA Annual Program Workshop "Machine Learning: Theory and Computation" (March 26--30, 2012), in particular, by talks of Steve Wright and Pradeep Ravikumar. 
The author is very thankful to Arkadi Nemirovski for an interesting discussion of the results and for his remarks.

\newpage


\begin{thebibliography}{9999}

\bibitem{Bar}  N.~K. Bary, Trigonometric series, Nauka, Moscow, 1961 (in Russian);    English transl. in
Pergamon Press, Oxford, 1964. 

\bibitem{BL} J.M. Borwein and A.S. Lewis, Convex Analysis and Nonlinear Optimization. Theory and Examples, Canadian Mathematical Society, Springer, 2006.


\bibitem{CRPW} V. Chandrasekaran, B. Recht, P.A. Parrilo, and A.S. Willsky, The convex geometry of linear inverse problems, Proceedings of the 48th Annual Allerton Conference on Communication, Control and Computing, 2010, 699--703.


\bibitem{D}  R.A. DeVore,  Nonlinear approximation,  Acta Numerica, {\bf 7} (1998),
   51--150. 
   
\bibitem{HHZ}   P. Habala, P. H{\' a}jek and V. Zizler,   Introduction to Banach spaces [I],   Matfyzpress,  Univerzity Karlovy, 1996.
   
\bibitem{N} Yu. Nesterov, Introductory Lectures on Convex Optimization: A Basic Course, Kluwer Academic Publishers, Boston, 2004.   
   
\bibitem{SSZ} S. Shalev-Shwartz, N. Srebro, and T. Zhang, Trading accuracy for sparsity in optimization problems with sparsity constrains, SIAM Journal on Optimization, {\bf 20(6)} (2010), 2807--2832.

\bibitem{T1} V.N. Temlyakov, Greedy Algorithms and $m$-term Approximation With Regard to Redundant Dictionaries, J. Approx. Theory {\bf 98} (1999),  117--145.


   \bibitem{T13}  V.N. Temlyakov, Weak Greedy Algorithms,
  Adv. Comput. Math., {\bf 12} (2000),  213--227.
   
  
\bibitem{T7} V.N. Temlyakov, Greedy-Type Approximation in Banach Spaces and Applications,  Constr. Approx.,   {\bf 21} (2005),    257--292.
  



 \bibitem{T2} V.N. Temlyakov,  Greedy approximation, Acta
Numerica, {\bf 17}  (2008), 235--409.
\bibitem{Tbook} V.N. Temlyakov, Greedy approximation, Cambridge University Press, 2011.

\bibitem{T} V.N. Temlyakov, Greedy approximation in convex optimization, Manuscript, 2012, 1--25.

\bibitem{TRD} A. Tewari, P. Ravikumar, and I.S. Dhillon, Greedy Algorithms for Structurally Constrained High Dimensional Problems, prerint, (2012), 1--10.

\bibitem{Z} T. Zhang, Sequential greedy approximation for certain convex optimization problems, IEEE Transactions on Information Theory, {\bf 49(3)} (2003), 682--691.

\end{thebibliography}
\end{document}